\theoremstyle{plain}
\newtheorem{claim}{Claim}
\newtheorem{lemma}{Lemma}
\newtheorem{theorem}{Theorem}
\newtheorem{definition}{Definition}
\newtheorem{fact}{Fact}
\newtheorem{corollary}{Corollary}
\newtheorem{observation}{Observation}
\newtheorem{assumption}{Assumption}
\newtheorem{example}{Example}
\newcommand{\remove}[1]{}
\newcommand{\calX}{{\cal X}}
\newcommand{\calY}{{\cal Y}}
\title{An $\tilde{\mbox{O}}$ptimal Differentially Private PAC Learner for Concept Classes with VC Dimension 1}
\author{Chao Yan\thanks{Department of Computer Science, Georgetown University. {\tt cy399@georgetown.edu}. Research is supported in part by a gift from Georgetown University.}}
\date{}
\begin{document}

\maketitle

\begin{abstract}
    We present the first nearly optimal differentially private PAC learner for any concept class with VC dimension 1 and Littlestone dimension $d$. Our algorithm achieves the sample complexity of $\tilde{O}_{\varepsilon,\delta,\alpha,\beta}(\log^*d)$, nearly matching the lower bound of $\Omega(\log^*d)$ proved by Alon et al.~\cite{AlonLMM19} [STOC19]. Prior to our work, the best known upper bound is $\tilde{O}(VC\cdot d^5)$ for general VC classes, as shown by Ghazi et al.~\cite{ghazi2021sample} [STOC21]. 

\end{abstract}

\section{Introduction}
Machine learning algorithms can access sensitive information from the training dataset. We research the privacy-preserving machine learning technique, introduced by Kasiviswanathan et al.~\cite{KLNRS08}, that targets to learn a hypothesis while preserving the privacy of individual entries in the dataset. Informally, the goal is to construct a learner that satisfies the requirements of probably approximately correct (PAC) learning~\cite{Valiant84} and, simultaneously, differential privacy~\cite{DMNS06}.

\remove{
\begin{definition}[PAC Learning~\cite{Valiant84}]
    Given a set of $n$ points $S=(\mathcal{X}\times\{0,1\})^n$ drawn i.i.d. from an unknown distribution $\mathcal{D}$ and labels that are given by an unknown concept $c\in\mathcal{C}$, we say a learner $L$ (possibly randomized) $(\alpha,\beta)$-PAC learns $\mathcal{C}$ if $h=L(S)$ and 
    $$
    \Pr[error_{\mathcal{D}}(c,h)\leq\alpha]\geq1-\beta.
    $$
\end{definition}
}

\begin{definition}[Differential Privacy~\cite{DMNS06}]
    A randomized algorithm $M$ is called $(\varepsilon,\delta)$-differentially private if for any two dataset $S$ and $S'$ that differ on one entry and any event $E$, it holds that
    $$
    \Pr[M(S)\in E]\leq e^{\varepsilon}\cdot \Pr[M(S')\in E]+\delta.
    $$
    Specifically, when $\delta=0$, we call it pure-differential privacy. When $\delta>0$, we call it approximate-differential privacy
\end{definition}

\remove{
\begin{definition}[differentially private learning~\cite{KLNRS08}]
    We say a learner $L$ $(\alpha,\beta, \varepsilon,\delta)$-differentially privately PAC learns the concept class $\mathcal{C}$ if
    \begin{enumerate}
        \item $L$ $(\alpha,\beta)$-PAC learns $\mathcal{C}$.
        \item $L$ is $(\varepsilon,\delta)$-differentially private.
    \end{enumerate}
\end{definition}
}

In the PAC learning task, the learner is given a set of $n$ points $S=(\mathcal{X}\times\{0,1\})^n$ drawn i.i.d. from an unknown distribution $\mathcal{D}$ and labels that are given by an unknown concept $c:\mathcal{X}\rightarrow\{0,1\}$, where $c\in\mathcal{C}$ and $\mathcal{C}$ is the concept class. The goal is to output a hypothesis $h:\mathcal{X}\rightarrow\{0,1\}$ such that $\Pr_{x\sim \mathcal{D}}[c(x)\neq h(x)]\leq\alpha$, where $\alpha$ is the parameter of accuray.
We call the required dataset size of the learning task the $\emph{sample complexity}$, which is a fundamental question in learning theory. For non-private PAC learning, it is well-known that the sample complexity is linear to the VC dimension of the concept class~\cite{Valiant84}. However, the sample complexity is less well understood in the differentially private setting. Kasiviswanathan et al.~\cite{KLNRS08} give a upper bound $O(\log|\mathcal{C}|)$, which works for pure differential privacy ($\delta=0$).  

In the approximate privacy regime ($\delta>0$), several works~\cite{BNS13b,BNSV15,beimel2019private,kaplan2020private,KaplanLMNS19,CohenLNSS22,NTY25} show that the sample complexity can be significantly lower than that in the pure setting. Alon et al.~\cite{AlonLMM19} and Bun et al.~\cite{bun2020equivalence} find that the sample complexity of approximately differentially private learning is related to the Littlestone dimension~\cite{littlestone87} of the concept class $\mathcal{C}$, which characterizes the mistakes bound of online learning~\cite{littlestone87}. In detail, for a concept class $\mathcal{C}$ with Littlestone dimension $d$, Alon et al.~\cite{AlonLMM19} prove a lower bound $\Omega(\log^*d)$\footnote{$\log^*d$ is the iterated logarithm of $d$} and Bun et al.~\cite{bun2020equivalence} provide an upper bound $O(2^{O(2^d)})$. Subsequently, Ghazi et al.~\cite{ghazi2021sample} improve the upper bound to $\tilde{O}(VC\cdot d^5)$.

This leaves a large gap between the lower bound $\Omega(\log^*d)$ and the upper bound $\tilde{O}(VC\cdot d^5)$, even when the VC dimension is as small as 1. The main question is: what is the best dependence on $d$? One important example is the halfspaces class, which can be privately learned with $poly(VC,\log^*d)$ examples by the work of Nissim et al.~\cite{NTY25}. So it's natural to conjecture that the tight sample complexity of differentially private learning depends on $\log^*d$.

Moreover, the work by Nissim et al.~\cite{NTY25} also shows an upper bound of $\tilde{O}(\log^*|\mathcal{X}|)$ (recall $|\mathcal{X}|$ is the input domain size) for any concept class with VC dimension 1. However, $|\mathcal{X}|$ could be arbitrarily larger than $d$ for the general concept class, and a significant gap remained between $\log^*d$ and $\min\{d^5,\log^*|\mathcal{X}|\}$. The remaining question is:

\begin{mdframed}
    Could we privately learn any VC 1 class with sample size $poly(\log^*d)$? 
\end{mdframed}

In this work, we give a positive answer to this question and give a nearly tight bound $\tilde{\theta}_{\alpha,\beta,\varepsilon,\delta}(\log^*d)$.

\subsection{Our Result}

\begin{theorem} [Improper Learning]
    For any concept class $\mathcal{C}$ with VC dimension 1 and Littlestone dimension $d$, there is an $(\varepsilon,\delta)$-differentially private algorithm that $(\alpha,\beta)$-PAC learns $\mathcal{C}$ if the given labeled dataset has size
    $$
    N\geq O\left(\frac{\log^*d\cdot\log^2(\frac{\log^*d}{\varepsilon\beta\delta})}{\varepsilon}\cdot \frac{48}{\alpha}\left(10\log(\frac{48e}{\alpha})+\log(\frac{5}{\beta})\right)\right)=\tilde{O}_{\beta,\delta}\left(\frac{\log^*d}{\alpha\varepsilon}\right)
    $$
    
\end{theorem}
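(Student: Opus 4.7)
My plan is to first construct a differentially private \emph{weak} learner for VC-$1$ classes of Littlestone dimension $d$ with fixed constant accuracy (say $\alpha_0 = 1/4$, $\beta_0 = 1/4$) using $\tilde{O}(\log^*d)$ samples, and then wrap it with standard DP accuracy/confidence boosting to obtain the general $(\alpha,\beta)$ guarantee. The $\text{poly}(1/\alpha, \log(1/\beta))$ multiplicative factor in the stated bound comes from the boosting layer, while the $\log^*d \cdot \log^2(\cdot)/\varepsilon$ factor comes from the weak learner.

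For the weak learner, I would exploit the standard tree representation of VC-dimension-$1$ concept classes: any such class with Littlestone dimension $d$ is equivalent to a class of root-to-vertex paths in a rooted tree of depth $d$, with a point labeled $1$ by a concept iff it lies on the concept's associated path. The weak learner is then recursive. At the outer level, using a small sample together with a stable private-histogram primitive in the spirit of Nissim et al.~\cite{NTY25}, the learner commits to a ``good'' ancestor at depth roughly $\log d$ in the tree. Committing to this ancestor reduces the residual problem to a VC-$1$ class of Littlestone dimension $\log d$, on which the algorithm recurses. The chain $d \to \log d \to \log \log d \to \cdots$ bottoms out after $\log^*d$ rounds at constant dimension, which is solved by a generic approximate-DP learner with $O(1)$ samples. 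Privacy is composed across the rounds via advanced composition, accounting for the $\log^2(\log^*d/\varepsilon\beta\delta)/\varepsilon$ factor.

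The main obstacle I anticipate is the per-round dimension-reduction primitive: arguing simultaneously that ancestor selection uses only $O(1)$ samples per round and that it is differentially private under a natural neighboring-dataset relation. The natural effective histogram domain at each round is the set of ancestors ``activated'' by the sample, whose size I would bound via Sauer-Shelah / VC-style arguments to be polynomial in the sample size, so that $\log^*$ of this effective domain is absorbed into the overall $\log^*d$. Stability of the selected ancestor under single-sample perturbation --- required for the histogram primitive to be private --- must be built into the definition of ``good ancestor''; I would use a majority-rule tie-breaking together with a stability analysis similar to the stable-median / stable-histogram technique already used in the VC-$1$ literature.

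Finally, the outer boosting step can be done via standard DP-compatible boosting schemes (e.g., a private analogue of AdaBoost, or running many independent weak learners and privately aggregating them) applied as a black box. The resulting $\tilde{O}\bigl((1/\alpha)(\log(1/\alpha)+\log(1/\beta))\bigr)$ overhead matches the second factor in the stated bound, and the privacy budget of the verification / aggregation step can be folded into the composition already accounted for in the weak learner's analysis.
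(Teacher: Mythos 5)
Your tree-representation premise is off in a way that propagates through the whole plan. For a VC-$1$ class with Littlestone dimension $d$, the depth of the Ben-David tree is controlled by the \emph{thresholds dimension}, which can be as large as $2^{d+1}$, not by $d$ itself. A key insight of the paper is precisely to bound the tree depth by $\mathrm{TD}$ and exploit that $\log^*(\mathrm{TD}) = O(\log^* d)$; your claim that the tree has ``depth $d$'' and that committing to an ancestor ``at depth roughly $\log d$'' yields a residual class of Littlestone dimension $\log d$ is neither stated correctly nor justified. Committing to an ancestor restricts the concept class to paths through that node, but nothing in your argument shows that the Littlestone dimension (or even the depth) of the residual subtree drops by an exponential; absent such a lemma, the recursion $d \to \log d \to \cdots$ does not bottom out after $\log^* d$ rounds. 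You also omit the second half of the aggregation problem: having privately chosen a depth, one still needs to privately select \emph{which} node at that depth is the right one, and there can be arbitrarily many candidates. The paper handles this with a separate choosing-mechanism step (using that at most one node per depth can lie on the true concept's path, giving a $1$-bounded quality function); a generic ``stable histogram'' over all nodes at a depth has an unbounded domain and no stability guarantee without that structural observation.

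The boosting layer is also a genuine gap rather than a black box. Differentially private boosting (to go from $\alpha_0 = 1/4$ to $\alpha$) is not a drop-in wrapper: reweighting the distribution interacts with the privacy model, and existing DP boosters carry their own privacy cost and sample-complexity overhead that you would need to compose with the $\log^* d$ rounds, not just multiply by $\mathrm{poly}(1/\alpha,\log(1/\beta))$. The paper avoids this entirely. It obtains the $\alpha$-dependence by a subsample-and-aggregate argument: partition $S$ into $t = \tilde{O}(\log^*d)$ blocks each of size $\Theta\bigl(\tfrac{1}{\alpha}\log\tfrac{1}{\alpha\beta}\bigr)$ so that VC generalization (Theorem~\ref{thm:learn vc}) makes each block's deterministic-point hypothesis $\alpha$-accurate on its own, then privately aggregates via one call to \textsf{PrivateMedian} on the depths $y_1,\dots,y_t$ (domain of size $\mathrm{TD} \le 2^{d+1}$, hence $\tilde{O}(\log^* d)$ samples) followed by one call to the choosing mechanism. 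The $\log^* d$ recursion you gesture at lives \emph{inside} the off-the-shelf interior-point algorithm of Cohen et al., which the paper invokes as a black box; your proposal essentially tries to reimplement that recursion over the concept tree without the private-interior-point machinery that makes it work, while simultaneously taking on a boosting problem the paper never has to solve.
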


\begin{theorem}[Proper Learning]
    For any concept class $\mathcal{C}$ with VC dimension 1 and Littlestone dimension $d$, and given labeled dataset with size
    $$
    \begin{array}{rl}
        N&\geq O\left(\frac{\log^*d\cdot\log^2(\frac{\log^*d}{\varepsilon\beta\delta})}{\varepsilon}\cdot \frac{48}{\alpha}\left(10\log(\frac{48e}{\alpha})+\log(\frac{5}{\beta})\right)+\frac{(\log(1/\alpha)+\log(1/\alpha\beta))\cdot\sqrt{\log(1/\delta)}}{\alpha^{2.5}\varepsilon}\right)\\
        &=\tilde{O}_{\beta,\delta}\left(\frac{\log^*d}{\alpha\varepsilon}+\frac{1}{\alpha^{2.5}\varepsilon}\right)
    \end{array}    
    $$
    there is an $(\varepsilon,\delta)$-differentially private algorithm that properly $(\alpha,\beta)$-PAC learns $\mathcal{C}$.
\end{theorem}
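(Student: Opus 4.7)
The plan is to bootstrap the improper learner of Theorem~1 into a proper one by appending a second stage that snaps the improper output back into $\mathcal{C}$. Split the input sample into two disjoint parts $S_1$ and $S_2$ whose sizes match the two additive terms in the stated bound. On $S_1$, run the improper $(\varepsilon/2,\delta/2)$-differentially private learner of Theorem~1 with accuracy $\alpha/3$ and confidence $\beta/2$, producing a (possibly improper) hypothesis $h$ with $\Pr_{x\sim\mathcal{D}}[h(x)\neq c^*(x)]\leq \alpha/3$. The goal of the second stage, operating on $S_2$, is then to privately produce a concept $c\in\mathcal{C}$ that is $\alpha/3$-close to $h$ on $\mathcal{D}$; by the triangle inequality this $c$ is $\alpha$-close to the unknown target $c^*$. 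Privacy of the overall algorithm follows by basic composition across the two halves.

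For the second stage I would use the fact that $c^*\in\mathcal{C}$ already witnesses $\min_{c\in\mathcal{C}}\Pr_\mathcal{D}[c(x)\neq h(x)]\leq \alpha/3$. Hence it suffices to run a private proper agnostic learner for $\mathcal{C}$, applied to $S_2$ relabeled by $h$, that returns a concept whose empirical disagreement with the relabeling is within an additive $\alpha/3$ of optimum. Standard VC-1 uniform convergence (which requires only $\tilde{O}(1/\alpha^2)$ samples and is absorbed into the stated bound) lifts this back to $\mathcal{D}$, yielding $\Pr_\mathcal{D}[c(x)\neq h(x)]\leq \alpha/3$. A union bound over the two stages keeps the overall failure probability below $\beta$, and a careful choice of splits across $\varepsilon$, $\delta$, $\alpha$, and $\beta$ reproduces the precise sample-complexity expression.

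The main obstacle is therefore building such a private proper agnostic learner for an arbitrary VC-1 concept class with sample complexity $\tilde{O}(\sqrt{\log(1/\delta)}/(\alpha^{2.5}\varepsilon))$. My approach is to exploit the Sauer--Shelah bound: a VC-1 class realizes at most $|S_2|+1$ distinct labelings on $S_2$, so the effectively relevant hypothesis space is finite and of size linear in $|S_2|$. A private selection mechanism --- for instance, the exponential mechanism with utility given by negative empirical error, or a stability-based primitive such as AboveThreshold / GAP-MAX applied to the candidate dichotomies ordered along the natural tree structure of a VC-1 class --- returns a labeling within $\tilde{O}(\log|S_2|/\varepsilon)$ additive empirical error of optimum. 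Balancing this additive overhead against the $1/\alpha^2$ uniform-convergence scale yields sample complexity $\tilde{O}(1/(\alpha^{2.5}\varepsilon))$, while the $\sqrt{\log(1/\delta)}$ factor enters through the approximate-DP cost of the selection step. Once this agnostic-learning lemma is in place, the two-stage composition above immediately gives the stated proper-learning theorem.
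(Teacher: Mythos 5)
The two-stage split — run the improper learner on $S_1$ to get a good hypothesis, then ``snap'' it to $\mathcal{C}$ using $S_2$ — matches the paper's high-level plan. But your second stage has a genuine gap, and the specific primitive you propose for it does not work.

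You treat the second stage as a black-box private proper \emph{agnostic} learner for $\mathcal{C}$ relabeled by $h$, with sample complexity $\tilde{O}(\sqrt{\log(1/\delta)}/(\alpha^{2.5}\varepsilon))$ and no dependence on $d$ or $|\calX|$. Such a learner cannot exist: thresholds over $\{1,\dots,2^d\}$ are VC-1, and private \emph{proper} learning of thresholds requires $\Omega(\log^*|\calX|)$ samples (the hard instances are already realizable, so agnostic is no easier). Your Sauer--Shelah argument is precisely the tempting but false shortcut that this lower bound rules out: although $\mathcal{C}$ realizes only $|S_2|+1$ labelings on $S_2$, that candidate set is a function of $S_2$, so ``run the exponential mechanism over the realized dichotomies'' is not a well-defined private mechanism --- on neighboring datasets $S_2$ and $S_2'$ the output spaces are different sets of labelings, and the score of any fixed output is not even defined on both. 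Fixing this by running the exponential mechanism over all of $\mathcal{C}$ brings back a $\log|\mathcal{C}|$ cost, and the stability primitives you mention (AboveThreshold, GAP-MAX) likewise give no $\log^*$-free guarantee for selecting among data-dependent candidates. The arithmetic is also off: balancing $\tilde{O}(\log|S_2|/\varepsilon)$ against a $1/\alpha^2$ uniform-convergence scale does not produce the $\alpha^{-2.5}\sqrt{\log(1/\delta)}$ term; in the paper that exponent comes from advanced composition over $O(1/\alpha)$ exponential-mechanism rounds.

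The paper's second stage is not a generic agnostic learner but a procedure that crucially exploits the output of the first stage. The improper learner returns a specific node $x_{good}$ on the VC-1 tree that lies on the path of the true concept, and all the paths from $x_{good}$ down to a proper node yield candidates whose only disagreement with $\hat h_{good}$ is on label-$0$ points along that path. The algorithm therefore builds the subtree $T_{good}$ rooted at $x_{good}$ whose leaves are the proper nodes, and descends it for $T=O(1/\alpha)$ iterations. At each level it privately checks whether some child has weight $\le O(\alpha N)$ (number of label-$0$ points in its subtree); if yes, an exponential mechanism picks a light child and stops (every leaf below then has small value), and if no, there can be at most $O(1/\alpha)$ children, so an exponential mechanism over this \emph{bounded}, \emph{data-independent} candidate set (the children in the tree, which is built from $\mathcal{C}$ alone) selects one whose subtree still contains a good leaf with error overhead $O(\log(1/\alpha)/\varepsilon)$. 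Advanced composition over the $O(1/\alpha)$ rounds gives the $\sqrt{\log(1/\delta)}/\alpha^{2.5}$ term. The key idea you are missing is that privacy is salvaged by making the output space the nodes of a fixed tree rather than data-dependent labelings, and that the branching factor is controlled by the weight dichotomy, not by Sauer--Shelah.
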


\subsection{Other Related Works}

\paragraph {Two Well Researched VC 1 Classes} The first class is the point functions class (also called the indicator function). It is defined to be $f_t(x)=1$ if $x=t$ and $f_t(x)=0$ if $x\neq t$. The point functions class has Littlestone Dimension 1, and Beimel et al.~\cite{BNS13b} prove it can be privately learned with $O(1)$ examples using the choosing mechanism (Lemma~\ref{lem:choosing mechanism}). 

The second example is the thresholds class. It is defined to be $f_t(x)=1$ if $x\geq t$ and $f_t(x)=0$ if $x< t$. The thresholds class has Littlestone dimension $\log|\mathcal{X}|$. There is a series of works on privately learning thresholds~\cite{BNS13b, BNSV15,KaplanLMNS19,CohenLNSS22}, and finally been found that we can privately learn thresholds with $\tilde{O}(\log^*|\mathcal{X}|)$ examples. The key observation is the equivalence between private learning thresholds and private median (or private interior point), which is observed by Bun et al.~\cite{BNSV15}.

\paragraph{Proper and Improper Private Learning} We call a learner proper if the output hypothesis is in the concept class and improper if the output may not be in the concept class. For pure differential privacy, Beimel et al.~\cite{BKN10} prove a lower bound on the sample complexity of proper learning point functions is $\Omega(\log|X|)$ and an upper bound of improper learning point functions $O(1)$. It shows a significant separation between proper and improper learning with pure differential privacy. Our result shows that for any VC 1 class, the separation between proper and improper learning with approximate differential privacy cannot be larger than $O(\frac{1}{\alpha^{O(1)}})$ (ignoring other parameters).

\subsection{Overview of Technique}
\subsubsection{Improper Learning}
The key observation is the tree structure\footnote{The Tree structure of the VC 1 class is mentioned in a lot of papers and is usually researched in the one-inclusion graph context. But most of them only consider the maximum concept classes. The observation by Ben-David can include non-maximum classes.} and partial order in the VC 1 class, which is observed by Ben-David~\cite{BenDavid20152NO}. In the tree structure, each node is a point from the domain $\mathcal{X}$, and each hypothesis is a path from one node to the tree's root. Another important idea is that we focus on the thresholds dimension instead of the Littlestone dimension. We show that the tree's height is upper bounded by the threshold dimension of the concept class, which has an upper bound $O(2^d)$~\cite{shelah1990classification,hodges1997shorter,AlonLMM19}. So that we can apply the private median algorithm (Theorem~\ref{thm:private median}) and achieve $\tilde{O}(\log^*d)$ sample complexity.

Based on the tree structure, we show that every VC 1 class can be privately learned in two steps:

\paragraph{Step 1: privately find a "good length".} We use the partition and aggregate method to construct the private learner. The labeled dataset $S$ is randomly partitioned into subsets $S_1,\dots,S_t$, where $t=\tilde{O}(\log^*d)$. Each subset has a set of "deterministic points", whose labels are fixed to be 1 by the given labeled points. We show that the deterministic points can be used to construct an accurate hypothesis, and all the sets of deterministic points of $S_1,\dots,S_t$ are on the same path. Since the length of the path is at most $O(2^d)$, we use the private median algorithm (Theorem~\ref{thm:private median}~\cite{CohenLNSS22}) to select a "good length" with sample size $\tilde{O}(\log^*d)$. 

\paragraph{Step 2: privately find a "good path".} We let the "good path" be the path such that all points on this path are deterministic by the given subset. We can show that for all paths with the "good length", only one can be the required "good path". So that we transform it to be a 1-bounded function (Definition~\ref{def: k-bounded}) and we can use the choosing mechanism (Lemma~\ref{lem:choosing mechanism}) to select a "good path". This "good path" can be projected to an accurate hypothesis.

\subsubsection{Example}
\begin{figure}[ht]
\includegraphics[scale=.3]{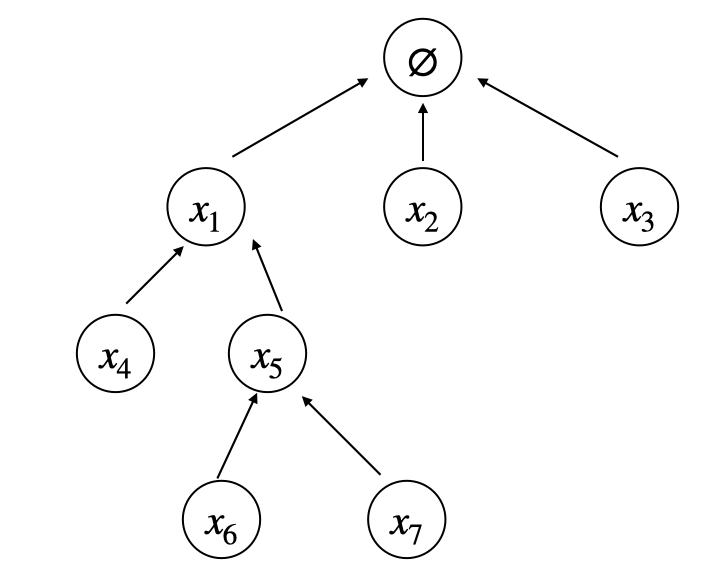}
\caption{Example of tree structure \label{fig:example}}
\end{figure}

Here we give an example. Let $\mathcal{X}=\{x_1,x_2,x_3,x_4,x_5,x_6,x_7\}$. Let $I(f)=\{x:f(x)=1\}$. Then $\mathcal{H}=\{h_1,h_2,h_3,h_4,h_5,h_6,h_7,h_8\}$, where 
$I(h_1)=\{x_1\}$, $I(h_2)=\{x_2\}$, $I(h_3)=\{x_3\}$, $I(h_4)=\{x_1,x_4\}$, $I(h_5)=\{x_1,x_5\}$, $I(h_6)=\{x_1,x_5,x_6\}$, $I(h_7)=\{x_1,x_5,x_7\}$, $I(h_8)=\emptyset$. The tree structure is shown in Figure~\ref{fig:example}. In the tree structure, each concept can be represented by a path from one node to the root (For example, $h_5$ is the path $x_5\rightarrow x_1\rightarrow\emptyset$. That means $h_5$ gives $x_1,x_5$ label 1 and gives other points label 0.). The tree structure has four layers. The first layer has $\emptyset$, the second layer has $x_1,x_2,x_3$, the third layer has $x_4,x_5$, the fourth layer has $x_6,x_7$. We will show that the layer number cannot be "too large" if the Littlestone dimension is bounded.

Let the underlying concept be $h_7$, for all subsets of the given dataset, the deterministic points set can only be $\emptyset$ or $\{x_1\}$ or $\{x_1,x_5\}$ or $\{x_1,x_5,x_7\}$. For instance, if we have $S_1,S_2$ and their deterministic points are $\{x_1\}$ and $\{x_1,x_5,x_7\}$. We record the maximum number of layers of deterministic points, which are 2 and 4. We can privately output the median of the layer numbers, say it is 3. Then we only consider the points on the third layer, which are $x_4$ and $x_5$. Notice that $x_4$ will never be the deterministic point, and approximately half of the subsets will make $x_5$ the deterministic point (because layer 3 is the median layer). Then we can use the choosing mechanism to select $x_5$. We consider the corresponding path to the root and its hypothesis $h_5$. Notice that if $S_1$ and $S_2$ contain enough points, we can show that $h_1$ and $h_7$ have high accuracy by VC theory. Then we can show $h_5$ also has high accuracy because it is in the middle of $h_1$ and $h_7$ (see Figure~\ref{fig:example output}).

\begin{figure}[ht]
\includegraphics[scale=.3]{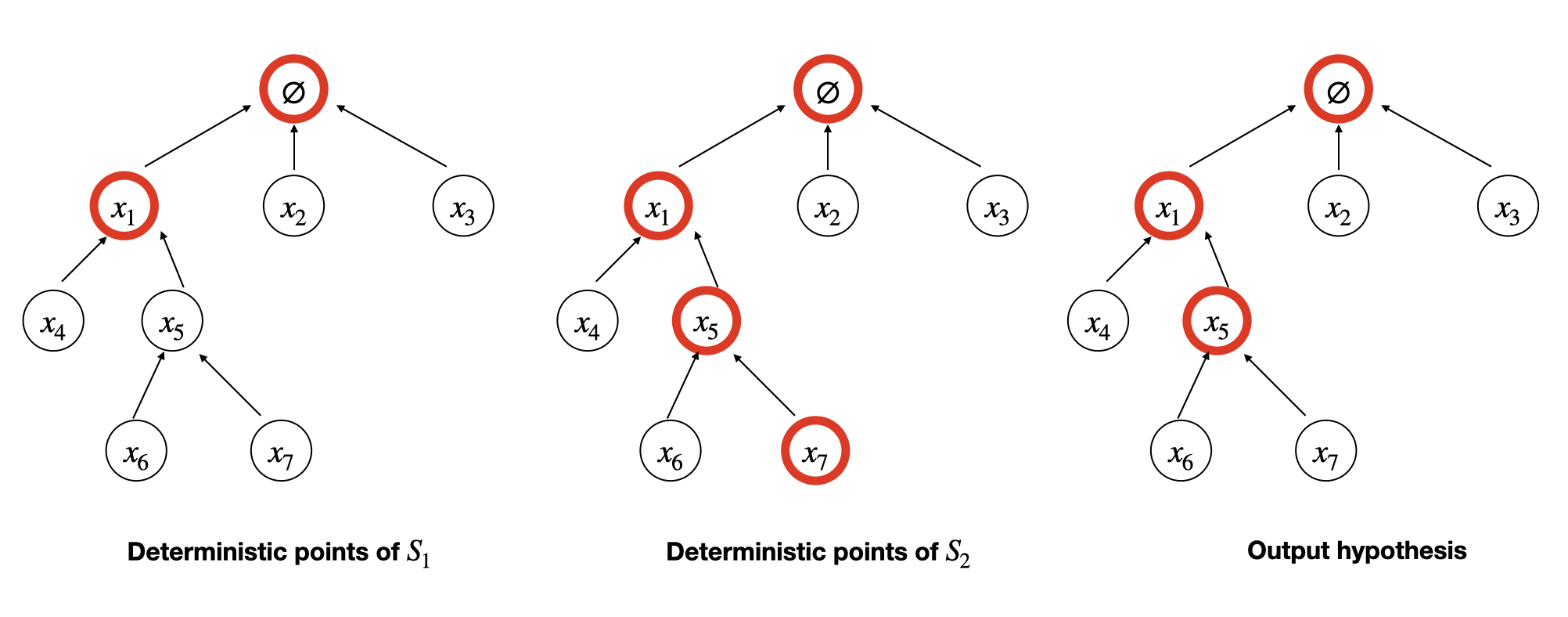}
\caption{Example of output hypothesis \label{fig:example output}}
\end{figure}

\remove{
\subsection{Open Questions}
\paragraph{Extension to Higher VC Dimension} In this work, we show that the sample complexity of privately learning VC 1 class is $\tilde{\theta}(\log^* d)$. We still don't know what the best dependence on $d$ is for $VC\geq 2$.

\paragraph{Proper and Improper Learning} Our learner is proper for maximum VC 1 classes but improper for non-maximum classes. We don't know if that upper bound is also true for proper learning or if there is a separation between proper and improper learning.
}

\subsubsection{Proper Learning}
The key observation is that the improper learner (Algorithm~\ref{alg:improperlearner}) outputs $x_{good}$, which is on the path of the true concept path. Let $x_{c^*_f}$ be the corresponding node of the true concept. Then, for all nodes on the path from $x_{good}$ to $x_{c^*_f}$, they can project to a highly accurate hypothesis. We call a node a proper node if it can be projected to a concept in $\mathcal{C}$. Thus, we can only consider the sub-tree $T_{good}$ that satisfies (see example in Figure~\ref{fig:subtree}):
\begin{enumerate}
    \item the root of $T_{good}$ is $x_{good}$
    \item all leaves of $T_{good}$ are proper nodes
    \item all nodes except leaves are not proper nodes
\end{enumerate}

\begin{figure}[ht]
\includegraphics[scale=.3]{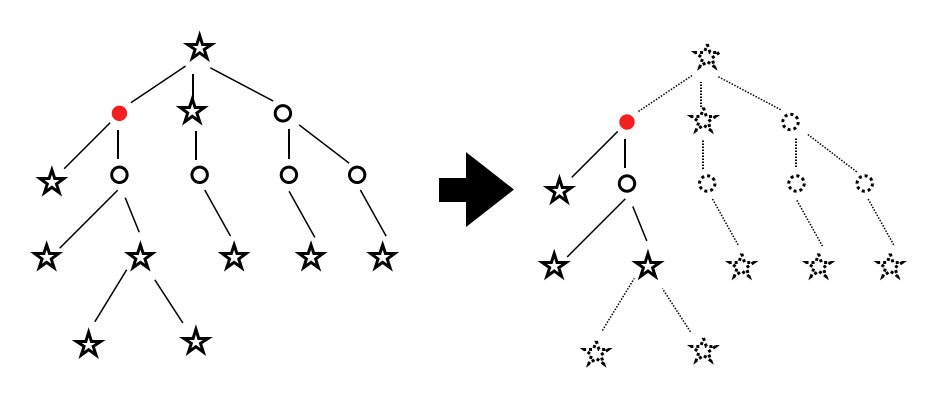}
\caption{We use the circle to represent the improper node and the star to represent the proper node. Let the red node be $x_{good}$. The figure shows the subtree with root $x_{good}$.\label{fig:subtree}}
\end{figure}

By the above observation, there must exist one leaf on the path from $x_{good}$ to $x_{c^*_f}$, and thus it has high accuracy and can be projected to a concept in $\mathcal{C}$. By the analysis of the improper learner, for all nodes on the sub-tree with root $x_{good}$, they correctly classify most points with label 1. Thus, we only need to consider the accuracy on the points with label 0. If we can find a leaf $x'$ such that only a small number of points with label 0 are located on the path from $x_{good}$ to $x'$, then for the corresponding hypothesis $h'$, where $I(h')=\{x|x'\preceq x\}$, $h'$ and $h_{good}$ (recall that $h_{good}$ is corresponding hypothesis of $x_{good}$) disagree a small number of labeled points in $S$. Thus, $h'$ has high accuracy.

Let $v(x)$ be the number of points with label 0 on the path from $x$ to $x_{good}$. The target is to find a proper node $x$ with a small value $v(x)$. Let $w(x)$ be the number of points with label 0 below $x$ on $T_{good}$. Notice that for all $x'\prec x$, we have $v(x')\leq v(x)+\sum_{x'\in T_x}[w(x)]$, where $T_x$ is sub-tree in $T_{good}$ with root $x$. So that we can look for a point $x$ such that $v(x)+\sum_{x'\in T_x}[w(x')]$ is small. Then, all leaves of $T_x$ are proper nodes with high accuracy. Consider another dataset with size $N$ and the following strategy to look for it:
\begin{enumerate}
    \item we set $x_{flag}=x_{good}$ at the beginning, let $X_{children}$ be the set of all children of $x_{flag}$.

    \item Case 1: there exists $x\in X_{children}$ such that $w(x)\leq O(\alpha\cdot N)$, then all leaves $\ell$ of sub-tree $T_x$ have $v(\ell)\leq w(x)+v(x)$. In this case, we cannot have too many "bad children" with heavy weights because every pair of children does not have an intersection part on their subtree, and the sum of weights cannot be larger than the number of points. Running the exponential mechanism can privately select a child with light weight even though $|X_{children}|$ may be large.

    \item Case 2: for any $x\in X_{children}$, we have $w(x)>\Omega(\alpha\cdot N)$. Since $\sum_{x\in X_{children}}w(x)\leq N$, we have $|X_{children}|\leq O(\frac{1}{\alpha})$. So we can use the exponential mechanism to select a child, such that it contains a "good leaf" $\ell$ satisfying $v(\ell)$ is small. Since $|X_{children}|\leq O(\frac{1}{\alpha})$, the error is bounded by $O(\frac{\log(1/\alpha)}{\varepsilon})$, where $\varepsilon$ is the privacy parameter.
\end{enumerate}

In case 1, we immediately find the good leaf. In case 2, we remove at least $\alpha\cdot N$ points that are located in the subtree of the other children. Thus, we can stop in $O(\frac{1}{\alpha})$ steps and the accumulated error is bounded by $O(\frac{\log(1/\alpha)}{\alpha\varepsilon})$. That is, in each iteration, there must exist a leaf $\ell$ in the subtree $T_{x_{flag}}$ satisfying $v(\ell)\leq \min_{\ell'\in T_{good}}v(\ell')+O(\frac{\log(1/\alpha)}{\alpha\varepsilon})$. Since running case 1 causes error $O(\alpha\cdot N)$ and the improper learner guarantees that $\min_{\ell'\in T_{good}}v(\ell')\leq O(\alpha\cdot N)$, the above strategy will make $v(\ell)\leq O(\alpha\cdot N+\frac{\log(1/\alpha)}{\alpha\varepsilon})$.

\section{Notations}
In learning theory, we call $\mathcal{X}$ a domain. The element $x\in\calX$ is the point. The concept $c$ is a function $c:\mathcal{X}\rightarrow\{0,1\}$. The concept class $\mathcal{C}$ is a set of concepts. We call a learned function $h$ a hypothesis. For any function $f$, we denote $I(f)=\{x:f(x)=1\}$

\section{Preliminary}
\subsection{Learning theory}

\begin{definition}[Error]
    Let $\mathcal{D}$ be a distribution, $c$ be a concept and $h$ be a hypothesis. The error of $h$ over $\mathcal{D}$ is defined as 
    $$
    error_{\mathcal{D}}(c,h)=\Pr_{x\sim\mathcal{D}}[c(x)\neq h(x)].
    $$
    For a finite set $S$, the error of $h$ over $S$ is defined as 
    $$
    error_S(c,h)=\frac{\left|\{x\in S:c(x)\neq h(x)\}\right|}{|S|}.
    $$
    We write it as $error_S(h)$ because $c(x)$ is given as the label of $x$ in $S$.
\end{definition}

\begin{definition}[PAC Learning~\cite{Valiant84}]
    Given a set of $n$ points $S=(\mathcal{X}\times\{0,1\})^n$ sampled from distribution $\mathcal{D}$ and labels that are given by an underlying concept $c$, we say a learner $L$ ($L$ could be randomized) $(\alpha,\beta)$-PAC learns $\mathcal{C}$ if $h=L(S)$ and 
    $$
    \Pr[error_{\mathcal{D}}(c,h)\leq\alpha]\geq1-\beta.
    $$
\end{definition}

\begin{definition} [Proper Learning]
 For the concept class $\mathcal{C}$, we call the learner proper if the output hypothesis $h\in\mathcal{C}$. We call the learner improper if the output hypothesis $h\notin\mathcal{C}$.
\end{definition}

\begin{definition}[VC Dimension\cite{VC}]
    For a domain $\mathcal{X}$ and a \emph{concept class} $\mathcal{C}$, We say $x_1,\dots,x_k$ is \emph{shattered} if for any subset $S\subseteq \{x_1,\dots,x_k\}$, there exists a concept $c\in\mathcal{C}$, such that $c(x)=1$ for any $x\in S$ and $c(x)=0$ for any $x\in \{x_1,\dots,x_k\}\backslash S$. The maximum size of the shattered set is called \emph{VC dimension}.
\end{definition}

\begin{theorem}[\cite{BlumerEhHaWa89}]\label{thm:learn vc}
    Let $\mathcal{C}$ be a concept class, and let $\mathcal{D}$ be a distribution over the domain $\mathcal{C}$. Let $\alpha,\beta>0$, and $m\geq \frac{48}{\alpha}\left(10VC(\mathcal{X},\mathcal{C})\log(\frac{48e}{\alpha})+\log(\frac{5}{\beta})\right)$. Let $S$ be a sample of $m$ points drawn i.i.d.\ from $\mathcal{D}$. 
    Then
    $$\Pr[\exists c,h\in\mathcal{C}~\mbox{s.t.}~error_{S}(c,h)\leq\alpha/10 ~\mbox{and}~error_{\mathcal{D}}(c,h)\geq\alpha]\leq \beta.$$
\end{theorem}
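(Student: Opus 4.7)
The plan is to prove this via the classical two-sample (ghost sample) symmetrization combined with Sauer's lemma, applied to the symmetric-difference class $\mathcal{H}=\{c\triangle h:c,h\in\mathcal{C}\}$, where $(c\triangle h)(x)=1$ iff $c(x)\neq h(x)$. Setting $A=c\triangle h$, the quantity $error_S(c,h)$ is exactly the empirical frequency $\tfrac{1}{m}|\{x\in S:A(x)=1\}|$, and $error_\mathcal{D}(c,h)=\Pr_{x\sim\mathcal{D}}[A(x)=1]$, so the bad event of the theorem is that some $A\in\mathcal{H}$ has empirical frequency on $S$ at most $\alpha/10$ yet true probability at least $\alpha$. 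A routine observation is $VC(\mathcal{H})\leq 2\,VC(\mathcal{X},\mathcal{C})$, since any set shattered by $\mathcal{H}$ can be reconstructed from the restrictions of the underlying pair $(c,h)$ and Sauer's lemma applied twice.

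First I would introduce a ghost sample $S'\sim\mathcal{D}^m$ independent of $S$. For any fixed $A$ with $\Pr_\mathcal{D}[A]\geq\alpha$, a Chernoff bound gives that the empirical frequency of $A$ on $S'$ is at least $\alpha/2$ with probability at least $1/2$ once $m\gtrsim 1/\alpha$, which yields the standard symmetrization inequality
$$\Pr[\exists A\in\mathcal{H}:\ error_S(A)\leq \alpha/10,\ \Pr_\mathcal{D}[A]\geq\alpha]\leq 2\Pr[\exists A\in\mathcal{H}:\ error_S(A)\leq\alpha/10,\ error_{S'}(A)\geq\alpha/2].$$

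Next I would condition on the multiset $T=S\cup S'$ of $2m$ points and use the fact that $(S,S')$ is distributed as a uniformly random ordered split of $T$. For a fixed $A$ marking $k$ points of $T$, the probability under a random split that at most $\alpha m/10$ of them fall in $S$ while at least $\alpha m/2$ fall in $S'$ is controlled by a hypergeometric tail bound and decays like $2^{-c\alpha m}$ for an explicit constant $c$. By Sauer's lemma the number of distinct restrictions of $\mathcal{H}$ to $T$ is at most $\bigl(\tfrac{2em}{d'}\bigr)^{d'}$ with $d'\leq 2\,VC(\mathcal{X},\mathcal{C})$, so a union bound over these restrictions followed by taking expectation over $T$ closes the loop.

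The main obstacle will be constant-chasing: matching the exact coefficients $48$, $10$, $48e$, $5$ in the sample-size bound requires carefully balancing the hypergeometric tail exponent against the Sauer growth $(2em/d')^{d'}$, and requires the specific threshold split $\alpha/10$ versus $\alpha/2$ so that symmetrization loses only a factor of $2$ and the subsequent union bound absorbs cleanly into the $\log(5/\beta)$ term. All conceptual ingredients are entirely standard (this is essentially Blumer--Ehrenfeucht--Haussler--Warmuth 1989), so I expect the real work to lie in reproducing the numerical constants rather than in any new ideas.
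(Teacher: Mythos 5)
The paper cites this theorem from Blumer--Ehrenfeucht--Haussler--Warmuth (1989) as an external black box and does not prove it, so there is no internal proof to compare your sketch against; the comparison has to be with the classical literature, and your outline is indeed the standard route. The ghost-sample symmetrization applied to the symmetric-difference class $\mathcal{H}=\{c\triangle h:c,h\in\mathcal{C}\}$, followed by a hypergeometric permutation bound and a union bound controlled by Sauer's lemma, is exactly how such ``relative-error'' uniform convergence statements are proved.

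One step you label a ``routine observation'' is not actually routine and, as stated, I believe is false: $VC(\mathcal{H})\leq 2\,VC(\mathcal{X},\mathcal{C})$ does not follow in general. Boolean combinations of a VC class can have VC dimension a constant (or even logarithmic) factor larger than twice the base dimension; the Sauer-based argument you gesture at only yields $VC(\mathcal{H})=O(d\log d)$, and to my knowledge no clean factor-$2$ bound holds for all classes. Fortunately your proof does not need any bound on $VC(\mathcal{H})$ at all. What the union bound over restrictions to the combined sample $T$ (of size $2m$) requires is only a bound on the growth function, and the inequality
$$\bigl|\mathcal{H}|_{T}\bigr|\;\leq\;\bigl|\mathcal{C}|_{T}\bigr|^{2}\;\leq\;\left(\frac{2em}{d}\right)^{2d},\qquad d=VC(\mathcal{X},\mathcal{C}),$$
is immediate from Sauer applied to $\mathcal{C}$ once for $c$ and once for $h$. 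You should replace the $VC(\mathcal{H})\leq 2d$ claim with this growth-function bound; then every remaining step in your sketch (the Chernoff argument converting $\Pr_{\mathcal{D}}[A]\geq\alpha$ into empirical frequency $\geq\alpha/2$ on the ghost sample, the hypergeometric tail for the $\alpha/10$ versus $\alpha/2$ gap under a random split of $T$, and the union bound) is sound. As you acknowledge, you have not carried the constants through to recover $48$, $10$, $48e$, $5$, so what you have is a correct high-level sketch of the standard argument rather than a verification of the exact quantitative statement.
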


\begin{definition} [Thresholds Dimension]
    For $x_1,\dots,x_k\in\calX$ and $c_1,\dots,c_k\in\mathcal{C}$, if for any $i\in[k]$, we have $c_i(x_j)=1$ for all $j\geq i$ and $c_i(x_j)=0$ for all $j<i$, then we call $\left((x_1,\dots,x_k),(c_1,\dots,c_k)\right)$ as a class of thresholds. The \emph{thresholds dimension} $TD(\calX,\mathcal{C})=argmax_k\{\exists x_1,\dots,x_k\in\calX,c_1,\dots,c_k\in\mathcal{C},\left((x_1,\dots,x_k),(c_1,\dots,c_k)\right)~\mbox{is a thresholds class}\}$, i.e. the length of the longest thresholds in $(\calX,\mathcal{C})$.
\end{definition}

\begin{definition}[Online Learning~\cite{littlestone87}]
    In the $i^{th}$ turn of the online learning setting, the learner receives a data point $x_i$ and predicts the label of $x_i$. Then the learner receives the true label of $x_i$.
\end{definition}

\begin{definition}[Littlestone Dimension~\cite{littlestone87}]
    In online learning, we say the learner makes a mistake if the learner's prediction is different from the true label in one turn. We say a learner is optimal if the learner can make the minimum number of mistakes when the learner outputs the true concept. The maximum number of mistakes the optimal learner makes is called the Littlestone dimension of $(\calX,\mathcal{C})$. We denote it as $d_L(\calX,\mathcal{C})$.
\end{definition}

\begin{theorem}~\cite{shelah1990classification,hodges1997shorter,AlonLMM19}
    $\lfloor\log d_L(\mathcal{X},\mathcal{C}))\rfloor\leq TD(\mathcal{X},\mathcal{C})\leq 2^{d_L(\mathcal{X},\mathcal{C})+1}$.
\end{theorem}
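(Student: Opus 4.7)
The theorem comprises two inequalities, and I would treat them separately. The upper bound $TD(\mathcal{X},\mathcal{C})\leq 2^{d_L(\mathcal{X},\mathcal{C})+1}$ is the easy direction, obtained by an explicit construction; the lower bound $\lfloor\log d_L(\mathcal{X},\mathcal{C})\rfloor\leq TD(\mathcal{X},\mathcal{C})$ is the content, and will require a combinatorial extraction argument.

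For the upper bound, the plan is to show that a length-$k$ threshold class yields a Littlestone mistake tree of depth $\lfloor\log k\rfloor$ by direct recursion. Given the threshold data $(x_1,c_1),\dots,(x_k,c_k)$, label the root of the tree under construction with $x_{\lceil k/2\rceil}$. By the threshold property $c_i(x_j)=1\iff j\geq i$, exactly the concepts $c_1,\dots,c_{\lceil k/2\rceil}$ assign label $1$ to this point and the remaining ones assign $0$; each half, together with the correspondingly restricted point list, is itself a threshold subsystem of length $\lceil k/2\rceil$ or $\lfloor k/2\rfloor$. Recursing yields a shattered binary tree of depth $\lfloor\log k\rfloor$, so $d_L\geq\lfloor\log k\rfloor$, giving $k<2^{d_L+1}$. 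Nothing is subtle here.

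For the lower bound, the plan is to extract a threshold system of length $t=\lfloor\log d\rfloor$ from a Littlestone mistake tree of depth $d=d_L$. Fix the leftmost root-to-leaf path and let $x_1,\dots,x_d$ be the points labeling its internal nodes in order of depth. For any leaf $\ell$ whose path first deviates right from the leftmost path at depth $k$, any realizing concept $c_\ell$ is forced to satisfy $c_\ell(x_i)=0$ for $i<k$ and $c_\ell(x_k)=1$, while the values $c_\ell(x_{k+1}),\dots,c_\ell(x_d)$ are unconstrained by the tree structure. My aim is to choose deviation depths $k_1<k_2<\dots<k_t$ and realizing concepts $c_m$ with first-deviation at $k_m$ so that $c_m(x_{k_j})=1$ for every $j>m$; together with the points $x_{k_1},\dots,x_{k_t}$ this data is exactly a length-$t$ threshold class.

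The forced values already supply the below-diagonal $0$s and the diagonal $1$s of the threshold matrix; the main technical obstacle is to pin down the above-diagonal entries simultaneously across $t$ concepts from the free coordinates alone. My plan for this is a Ramsey-style coloring on pairs of deviation depths. For each pair $k<k'$, color it by whether some realizing concept with first-deviation at $k$ takes value $1$ at $x_{k'}$, then apply Ramsey (or an iterative pruning argument) on the complete graph over the $d$ deviation depths to extract a subset of size $\lfloor\log d\rfloor$ on which the colors are compatible with an upper-triangular threshold pattern. The logarithmic loss in the bound is precisely the Ramsey-number cost of this extraction, matching the $\lfloor\log d\rfloor$ in the statement.
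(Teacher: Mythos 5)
The paper states this theorem with citations to Shelah, Hodges, and Alon et al.\ and does not prove it, so there is no in-paper proof to compare against; I will therefore assess your proposal on its own terms. Your decomposition into the two inequalities is right, and your ``easy direction'' is correct: given a threshold $(x_1,\dots,x_k;c_1,\dots,c_k)$, querying the median point splits the concepts into the two contiguous halves and recursing yields a shattered tree of depth $\Theta(\log k)$, which rearranges to $TD\leq 2^{d_L+1}$. (Bookkeeping around the ``all-zero'' concept on the chosen half shifts the depth by $\pm 1$, which is absorbed by the $+1$ in the exponent.)

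The lower bound $\lfloor\log d_L\rfloor\leq TD$ is where the gap lies. You fix the leftmost-path points $x_1,\dots,x_d$ and, for each $k$, look at concepts whose first rightward deviation is at depth $k$; you are correct that the diagonal and sub-diagonal entries of the prospective threshold matrix are forced, and that the super-diagonal entries $c_m(x_{k_j})$, $j>m$, are \emph{not} controlled by the tree (once a path deviates right at depth $k$, the subtree below labels its nodes with arbitrary points, not with $x_{k+1},\dots,x_d$). Your Ramsey patch, however, does not close this. First, coloring a pair $\{k,k'\}$ by the \emph{existence} of a first-deviation-at-$k$ concept with value $1$ at $x_{k'}$ gives, on a monochromatic set, only pairwise witnesses: a possibly different realizing concept for each pair $(k_m,k_j)$. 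The threshold needs a \emph{single} concept $c_m$ that is $1$ at \emph{all} of $x_{k_{m+1}},\dots,x_{k_t}$ simultaneously, and pairwise Ramsey does not merge distinct witnesses into one. Second, if instead you fix a canonical witness $c_k$ per depth and color by its actual value $c_k(x_{k'})$, Ramsey may return the ``all zero'' color, which yields an identity matrix (1 on the diagonal and 0 everywhere controlled) --- not a threshold --- and nothing in the setup forces the favorable color. Trying to repair both issues at once with higher-arity Ramsey would degrade the bound to an iterated logarithm rather than $\lfloor\log d\rfloor$, and even plain two-color pairwise Ramsey would cost a constant factor you cannot afford against the exact $\lfloor\log_2 d\rfloor$ in the statement. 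The proof in the cited sources is a sharper recursive extraction on the mistake tree rather than a one-shot Ramsey step; as written, your argument does not close.
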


\begin{corollary}\label{cor: threshold dimension and littlestone dimension}
    $O(\log^*(TD(\mathcal{X},\mathcal{C})))=O(\log^*(d_L(\mathcal{X},\mathcal{C})))$
\end{corollary}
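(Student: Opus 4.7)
The plan is to derive the corollary directly from the preceding theorem, which sandwiches $TD$ between $\lfloor\log d_L\rfloor$ and $2^{d_L+1}$. Since $\log^*$ satisfies $\log^*(2^n) = 1 + \log^*(n)$ for all sufficiently large $n$, and is monotone non-decreasing, both directions of the claimed asymptotic equivalence reduce to applying $\log^*$ to the inequalities and absorbing the resulting $O(1)$ additive shifts into the big-$O$.

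Concretely, I would carry out two short estimates. First, from $TD(\mathcal{X},\mathcal{C})\leq 2^{d_L(\mathcal{X},\mathcal{C})+1}$, monotonicity of $\log^*$ gives
\[
\log^*\!\bigl(TD(\mathcal{X},\mathcal{C})\bigr)\;\leq\;\log^*\!\bigl(2^{d_L(\mathcal{X},\mathcal{C})+1}\bigr)\;=\;1+\log^*\!\bigl(d_L(\mathcal{X},\mathcal{C})+1\bigr)\;\leq\;\log^*\!\bigl(d_L(\mathcal{X},\mathcal{C})\bigr)+2.
\]
Second, from $\lfloor\log d_L(\mathcal{X},\mathcal{C})\rfloor\leq TD(\mathcal{X},\mathcal{C})$ I obtain $d_L(\mathcal{X},\mathcal{C})\leq 2^{TD(\mathcal{X},\mathcal{C})+1}$, and by the same manipulation
\[
\log^*\!\bigl(d_L(\mathcal{X},\mathcal{C})\bigr)\;\leq\;\log^*\!\bigl(TD(\mathcal{X},\mathcal{C})\bigr)+2.
\]
Combining these two inequalities yields $\log^*(TD(\mathcal{X},\mathcal{C})) = \log^*(d_L(\mathcal{X},\mathcal{C})) \pm O(1)$, which is exactly the $O(\cdot)$-equivalence in the statement.

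There is essentially no obstacle here: the corollary is a one-line consequence of the theorem, since raising to an exponential tower only costs one extra iteration of $\log$. The only mild care needed is to handle small values of $d_L$ or $TD$ (where $\log^*$ is already $O(1)$), which is trivial because the big-$O$ absorbs any constant. So the ``hard'' part is nothing more than noting that one application of $\log$ (or one exponentiation) changes $\log^*$ by at most one, making the two quantities interchangeable up to a constant factor.
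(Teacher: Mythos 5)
Your proposal is correct and is exactly the argument the paper implicitly relies on: the paper states the corollary without proof as an immediate consequence of the preceding sandwich $\lfloor\log d_L\rfloor\le TD\le 2^{d_L+1}$, and your two applications of $\log^*(2^n)=1+\log^*(n)$ together with monotonicity are the intended one-line derivation. The only cosmetic remark is that the step $\log^*(d_L+1)\le\log^*(d_L)+1$ is looser than necessary (the two sides are equal except at finitely many thresholds), but it is of course valid and the slack is absorbed by the $O(\cdot)$.
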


\subsection{Differential privacy}

\begin{definition}[Differential Privacy~\cite{DMNS06}]
    A mechanism $M$ is called $(\varepsilon,\delta)$-differentially private if for any two dataset $S$ and $S'$ that differ on one entry and any event $E$, it holds that
    $$
    \Pr[M(S)\in E]\leq e^{\varepsilon}\cdot \Pr[M(S')\in E]+\delta.
    $$
\end{definition}

\begin{theorem}[Post-processing~\cite{DMNS06}]\label{thm:post-processing}
    For any $(\varepsilon,\delta)$-differentially private mechanism $M$ and any function $A$ ($A$ could be randomized), the mechanism $A\circ M$ is $(\varepsilon,\delta)$-differentially private.
\end{theorem}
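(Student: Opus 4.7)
The plan is to reduce the randomized case to the deterministic case by conditioning on the internal randomness of $A$, and then to handle the deterministic case by pulling the event back through $A$.

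First, I would handle the deterministic case. Suppose $A$ is a deterministic function from the output space $\mathcal{Y}$ of $M$ to some space $\mathcal{Z}$, and fix any neighboring datasets $S, S'$ and any event $E \subseteq \mathcal{Z}$. Set $F = A^{-1}(E) = \{y \in \mathcal{Y} : A(y) \in E\}$. Then the events $\{A(M(S)) \in E\}$ and $\{M(S) \in F\}$ coincide, so applying the $(\varepsilon,\delta)$-DP guarantee of $M$ to the event $F$ immediately yields
\[
\Pr[A(M(S)) \in E] = \Pr[M(S) \in F] \leq e^{\varepsilon}\Pr[M(S') \in F] + \delta = e^{\varepsilon}\Pr[A(M(S')) \in E] + \delta.
\]

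Next, I would extend to randomized $A$ by writing it as a deterministic function of its input together with an independent source of randomness $R$, i.e. $A(y) = \tilde{A}(y, R)$ for some deterministic $\tilde{A}$ and $R$ independent of $M$. For each fixed outcome $r$ of $R$, the map $y \mapsto \tilde{A}(y, r)$ is deterministic, so by the previous paragraph
\[
\Pr[\tilde{A}(M(S), r) \in E] \leq e^{\varepsilon}\Pr[\tilde{A}(M(S'), r) \in E] + \delta.
\]
Taking expectation over $R$ (which is independent of both $M(S)$ and $M(S')$) and applying Fubini/linearity of expectation gives
\[
\Pr[A(M(S)) \in E] \leq e^{\varepsilon}\Pr[A(M(S')) \in E] + \delta,
\]
completing the proof.

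I do not anticipate a real obstacle here; the only care needed is the measure-theoretic step of representing the randomized $A$ via independent randomness, which is standard, and making sure that $R$ is genuinely independent of $M(S)$ and $M(S')$ so that the conditional DP inequalities can be integrated without altering the constants $e^{\varepsilon}$ and $\delta$.
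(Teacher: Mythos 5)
The paper cites this post-processing theorem from prior work and does not include a proof, so there is no in-paper argument to compare against. Your proof is the standard, correct one: pull the event back through a deterministic post-processor to reduce to the DP guarantee of $M$, then lift to randomized $A$ by conditioning on an independent randomness source $R$ and integrating the pointwise inequality over $R$; the independence of $R$ from $M(S)$ and $M(S')$ is exactly what lets the $e^{\varepsilon}$ and $\delta$ pass through unchanged, as you note.
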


\begin{theorem}[Composition~\cite{DworkL09}]\label{thm:composition}
    For an $(\varepsilon_1,\delta_1)$-differentially private mechanism $M_1$ and an $(\varepsilon_2,\delta_2)$-differentially private mechanism $M_2$, the composed mechanism $M(X)=(M_1(X),M_2(X))$ is $(\varepsilon_1+\varepsilon_2,\delta_1+\delta_2)$-differentially private.
\end{theorem}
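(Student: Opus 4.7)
The plan is to fix two neighboring datasets $S, S'$ and an arbitrary event $E \subseteq \mathcal{R}_1 \times \mathcal{R}_2$ in the joint output space, and establish $\Pr[M(S) \in E] \leq e^{\varepsilon_1+\varepsilon_2}\Pr[M(S') \in E] + \delta_1 + \delta_2$, which is exactly the $(\varepsilon_1+\varepsilon_2, \delta_1+\delta_2)$-DP condition. Because $M_1$ and $M_2$ are applied to $X$ with independent internal randomness (the convention implicit in writing $M(X) = (M_1(X), M_2(X))$), the joint output law factors conditionally on the input, so I can decompose
$$
\Pr[M(S) \in E] \;=\; \sum_{y_1} \Pr[M_1(S) = y_1] \cdot \Pr[M_2(S) \in E_{y_1}],
$$
where $E_{y_1} := \{y_2 : (y_1, y_2) \in E\}$ is the $y_1$-section of $E$ (sums should be read as integrals against densities with respect to a dominating measure in the continuous case).

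The proof then peels off one mechanism's privacy cost at a time. First, I would apply $(\varepsilon_2, \delta_2)$-DP of $M_2$ pointwise in $y_1$ to the inner probability, $\Pr[M_2(S) \in E_{y_1}] \leq e^{\varepsilon_2}\Pr[M_2(S') \in E_{y_1}] + \delta_2$; substituting and using $\sum_{y_1}\Pr[M_1(S) = y_1] = 1$ yields a bound of $e^{\varepsilon_2}\mathbb{E}_{y_1 \sim M_1(S)}[f(y_1)] + \delta_2$, where $f(y_1) := \Pr[M_2(S') \in E_{y_1}] \in [0,1]$ depends on $y_1$ alone. Next, I would bound this expectation using $(\varepsilon_1, \delta_1)$-DP of $M_1$ in its equivalent expectation form for $[0,1]$-valued test functions, $\mathbb{E}_{M_1(S)}[f] \leq e^{\varepsilon_1}\mathbb{E}_{M_1(S')}[f] + \delta_1$, which follows from the event-based definition via the layer-cake identity $\mathbb{E}[f] = \int_0^1 \Pr[f > t]\,dt$ by applying DP to each upper-level event $\{y_1 : f(y_1) > t\}$ and integrating in $t$. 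Recognizing $\mathbb{E}_{M_1(S')}[f] = \Pr[M(S') \in E]$ closes the computation.

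The main obstacle is the additive-error bookkeeping: naive combination of the two stages gives $e^{\varepsilon_2}\delta_1 + \delta_2$ in place of the desired $\delta_1 + \delta_2$, and to recover the clean bound one invokes the standard ``good-set'' refinement, partitioning $\mathcal{R}_1$ into $G = \{y_1 : \Pr[M_1(S) = y_1] \leq e^{\varepsilon_1}\Pr[M_1(S') = y_1]\}$ and its complement, applying the pointwise ratio on $G$ and controlling the $\Pr[M_1(S) \in G^c]$ mass by invoking DP of $M_1$ directly on the event $G^c$. A secondary but necessary point is making the independence of the internal coins of $M_1$ and $M_2$ explicit at the outset, since without it the initial product decomposition of the joint law is not justified; once both are in place, the remainder of the argument is routine pointwise inequality manipulation.
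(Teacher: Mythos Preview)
The paper does not prove this statement at all: it is presented in the preliminaries as a background result with a citation to \cite{DworkL09}, and no argument is given in the paper itself. Your proposal is the standard proof of basic composition and is correct, including the observation that a naive two-stage bound yields $e^{\varepsilon_2}\delta_1+\delta_2$ and that the clean $\delta_1+\delta_2$ requires the good-set partition of the range of $M_1$; the explicit use of independent internal randomness to justify the product decomposition is also appropriate.
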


\begin{theorem}[Advanced composition~\cite{DRV10}]\label{thm:advancedcomposition}
    Let $M_1,\dots,M_k:\calX^n\rightarrow \calY$ be $(\varepsilon,\delta)$-differentially private mechanisms. Then the algorithm that on input $S\in \calX^n$ outputs $(M_1(S),\dots,M_k(S))$ is $(\varepsilon',k\delta+\delta')$-differentially private, where $\varepsilon'=\sqrt{2k\ln(1/\delta')}\cdot \varepsilon$ for every $\delta'>0$.
\end{theorem}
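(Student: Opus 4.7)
The plan is to use the privacy-loss random variable together with a concentration inequality, following Dwork, Rothblum, and Vadhan~\cite{DRV10}. Fix two neighboring datasets $S, S'$ and let $P_i$ and $Q_i$ denote the output distributions of $M_i(S)$ and $M_i(S')$ respectively. For an outcome $y_i$, define the privacy loss $Z_i(y_i) = \ln(P_i(y_i)/Q_i(y_i))$, viewed as a random variable when $y_i$ is drawn from $P_i$. A standard argument (separating outcomes with bounded privacy loss from the rest) shows that to establish $(\varepsilon', k\delta+\delta')$-DP for the composed mechanism, it suffices to prove
$$
\Pr_{y\sim M(S)}\!\left[\sum_{i=1}^k Z_i(y_i) > \varepsilon'\right] \leq k\delta + \delta'.
$$

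The first step is to reduce to an essentially pure-DP situation. The standard decomposition lemma says that any $(\varepsilon,\delta)$-DP mechanism $M_i$ admits a ``bad event'' $B_i \subseteq \calY$ with $\Pr[M_i(S)\in B_i]\leq \delta$ such that on $\calY\setminus B_i$ the ratio $P_i(y_i)/Q_i(y_i)$ lies in $[e^{-\varepsilon}, e^\varepsilon]$, so $|Z_i(y_i)| \leq \varepsilon$ there. A union bound over $i=1,\ldots,k$ controls the total probability of ever hitting any $B_i$ by $k\delta$, which accounts for the $k\delta$ term in the final $\delta$-parameter.

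Conditioned on avoiding all bad events, each $Z_i$ is bounded in $[-\varepsilon,\varepsilon]$, and moreover $\mathbb{E}_{y_i\sim P_i}[Z_i]$ equals the KL divergence $\mathrm{KL}(P_i\|Q_i)$, which is $O(\varepsilon^2)$ under the one-sided bound $Z_i\leq \varepsilon$ together with the fact that $P_i, Q_i$ are probability distributions. Because $M_1,\ldots,M_k$ are non-adaptive and run on the same input, the $Z_i$ are mutually independent under $M(S)$; applying a Chernoff / Azuma-Hoeffding bound to $\sum_{i=1}^k Z_i$ via its moment generating function yields
$$
\Pr\!\left[\sum_{i=1}^k Z_i > \sqrt{2k\ln(1/\delta')}\cdot \varepsilon\right] \leq \delta',
$$
where the lower-order $O(k\varepsilon^2)$ mean-shift is absorbed into the stated $\varepsilon'$ in the small-$\varepsilon$ regime. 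Combining with the $k\delta$ bad-event probability gives the desired tail bound and hence the $(\varepsilon', k\delta+\delta')$-DP guarantee.

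The main obstacle is executing the reduction to pure DP in the first step: the ``bad event'' decomposition is defined at the level of a single mechanism and must be patched together consistently across all $k$ coordinates. The payoff of the non-adaptive setting assumed here is that the events $B_i$ are independent (since they depend only on $M_i$'s internal randomness), so the union bound is tight and the independence of the $Z_i$ needed for the concentration step is automatic. Once this reduction is in place, the concentration step is a routine application of a sub-Gaussian moment generating function bound to a sum of independent bounded random variables.
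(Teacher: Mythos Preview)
The paper does not prove this theorem; it is stated in the Preliminaries as a known result cited from~\cite{DRV10} and used as a black box. So there is no ``paper's own proof'' to compare against. Your sketch is the standard Dwork--Rothblum--Vadhan argument and is broadly correct in spirit.

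One point worth tightening: the mean of $\sum_i Z_i$ is of order $k\varepsilon(e^\varepsilon-1)$ (equivalently $O(k\varepsilon^2)$ for small $\varepsilon$), and this term does \emph{not} simply get absorbed into $\sqrt{2k\ln(1/\delta')}\,\varepsilon$ in general---the full DRV bound is $\varepsilon' = \sqrt{2k\ln(1/\delta')}\,\varepsilon + k\varepsilon(e^\varepsilon-1)$. The form stated in this paper drops the second term, which is the common simplification valid only when $k\varepsilon^2$ is dominated by the first term; your hand-wave ``absorbed \ldots\ in the small-$\varepsilon$ regime'' is exactly where this discrepancy lives, so if you were writing a full proof you would need to either carry that term or state the regime assumption explicitly. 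Also, the KL bound $\mathrm{KL}(P_i\|Q_i)=O(\varepsilon^2)$ uses the two-sided bound $|Z_i|\le\varepsilon$ (after restricting to the good event), not just the one-sided bound you mention.
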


\begin{theorem}[Exponential Mechanism~\cite{McSherryT07}]\label{thm:exponential mechanism}
Give a score function $q(D,r)$, where $D$ is a database and $r\in R$ is a possible output. Let $\Delta=\max\left(|q(x,r)-q(x',r)|\right)$ for all $r$ and neighbouring databases $x$ and $x'$. The mechanism $M(D)$ outputs $r$ 
with probability$\frac{\mbox{exp}(\frac{\varepsilon q(D,r)}{2\Delta})}{\sum_{i\in R}\mbox{exp}(\frac{\varepsilon q(D,i)}{2\Delta})}$. Then $M$ is $\varepsilon$-differentially private.
\end{theorem}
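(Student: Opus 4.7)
The plan is to establish pure $\varepsilon$-differential privacy by bounding the pointwise output-probability ratio, which suffices because integrating (or summing) this multiplicative bound over any event $E$ preserves it. I would fix neighboring databases $D, D'$ and an arbitrary output $r \in R$, and aim to show
$$\frac{\Pr[M(D)=r]}{\Pr[M(D')=r]} \leq e^{\varepsilon}.$$
Applying the same argument with $D$ and $D'$ swapped (and integrating over $r \in E$) will then yield the definition directly, with $\delta = 0$.

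The key step is to decompose this ratio into a ``numerator factor'' $\exp(\varepsilon q(D,r)/(2\Delta))/\exp(\varepsilon q(D',r)/(2\Delta))$ and a ``normalizer factor'' $Z(D')/Z(D)$, where $Z(D) = \sum_{i \in R} \exp(\varepsilon q(D,i)/(2\Delta))$, and then to bound each factor by $e^{\varepsilon/2}$. The numerator factor is immediate from the sensitivity bound $|q(D,r)-q(D',r)| \leq \Delta$ applied inside the exponent. For the normalizer factor, I would apply the sensitivity bound termwise inside the sum, observe that each term $\exp(\varepsilon q(D',i)/(2\Delta))$ is within a factor of $e^{\varepsilon/2}$ of $\exp(\varepsilon q(D,i)/(2\Delta))$, and factor the constant $e^{\varepsilon/2}$ out of the sum so it cancels against $Z(D)$. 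Multiplying the two $e^{\varepsilon/2}$ bounds yields exactly $e^{\varepsilon}$.

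There is no serious obstacle here; the only design insight --- which motivates the factor $1/2$ in the exponent of the mechanism --- is that both the numerator and the denominator shift under a neighbor change, so each must absorb only half of the privacy budget. The argument goes through identically whether $R$ is finite, countable, or a measurable set (with the sum replaced by an integral against a base measure), because the multiplicative $e^{\varepsilon/2}$ factor is uniform across terms and thus pulls out of any linear aggregation. No composition, no advanced-composition bookkeeping, and no $\delta$-slack is needed.
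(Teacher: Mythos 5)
Your argument is correct and is exactly the standard proof of the exponential mechanism's privacy guarantee. The paper does not prove this statement itself --- it cites it directly from McSherry--Talwar~\cite{McSherryT07} as a preliminary tool --- so there is no in-paper proof to compare against; your decomposition of the likelihood ratio into a numerator factor and a normalizer factor, each bounded by $e^{\varepsilon/2}$ via the sensitivity bound (with the termwise bound pulling a uniform $e^{\varepsilon/2}$ out of the sum in the normalizer), is precisely the canonical argument and correctly explains the role of the $1/2$ in the exponent.
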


\begin{definition}[Differentially Private PAC Learning~\cite{KLNRS08}]
    Given a set of $n$ points $S=(\mathcal{X}\times\{0,1\})^n$ sampled from distribution $\mathcal{D}$ and labels that are given by an underlying concept $c$, we say a learner $L$ $(\alpha,\beta,\varepsilon,\delta)$-differentially privately PAC learns the concept class $\mathcal{C}$ if
    \begin{enumerate}
        \item $L$ $(\alpha,\beta)$-PAC learns $\mathcal{C}$.
        \item $L$ is $(\varepsilon,\delta)$-differentially private.
    \end{enumerate}
\end{definition}

\begin{definition}[$\alpha$-median]
    For a set of number $S=\{x_1,\dots,x_n\}$, we say a number $\hat{x}$ is an $\alpha$-median of $S$ if $\min\{|\{x:x\leq\hat{x},x\in S|,|\{x:x\geq\hat{x},x\in S|\}\geq (1/2-\alpha)\cdot|S|$
\end{definition}

\begin{fact}
    For any set of number $S=\{x_1,\dots,x_n\}$, there exists a $1/2$-median.
\end{fact}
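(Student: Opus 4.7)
The fact is essentially an unpacking of the definition, and the plan is a one-line observation followed by exhibiting an explicit witness.

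First I would note that when $\alpha=1/2$, the right-hand side of the defining inequality becomes $(1/2-1/2)\cdot|S|=0$. Since both quantities $|\{x\in S:x\leq\hat{x}\}|$ and $|\{x\in S:x\geq\hat{x}\}|$ are non-negative integers for every real number $\hat{x}$, the condition $\min\{\cdot,\cdot\}\geq 0$ holds vacuously, so any $\hat{x}\in\mathbb{R}$ formally qualifies as a $1/2$-median.

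To give a meaningful witness rather than an arbitrary real, I would exhibit the usual median. Sort the multiset as $y_1\leq y_2\leq\cdots\leq y_n$ and set $\hat{x}=y_{\lceil n/2\rceil}$. By construction at least $\lceil n/2\rceil$ elements of $S$ are $\leq\hat{x}$, and at least $n-\lceil n/2\rceil+1$ elements are $\geq\hat{x}$ (the $+1$ is because $\hat{x}$ itself is counted on both sides under the weak inequalities in the definition). Both counts are at least $1\geq 0$, so $\hat{x}$ satisfies the required inequality.

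There is no real obstacle to overcome here: the main subtlety is merely recognizing the algebraic cancellation $1/2-1/2=0$ and being careful that the definition uses weak (rather than strict) inequalities, so that an element equal to $\hat{x}$ is charged to both sides. The fact is stated presumably to confirm that the notion of $\alpha$-median is well-defined at the boundary value $\alpha=1/2$, so that downstream invocations of the private median algorithm can rely on existence without side conditions.
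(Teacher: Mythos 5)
Your proof is correct. The paper states this Fact without a proof, so there is nothing to compare against; it is evidently treated as immediate. Your reading of the definition is right: with $\alpha=1/2$, the lower bound $(1/2-\alpha)\cdot|S|=0$ makes the condition $\min\{\cdot,\cdot\}\geq 0$ vacuous, so every real number qualifies and existence is trivial. Your explicit witness $\hat{x}=y_{\lceil n/2\rceil}$ actually establishes the stronger, non-vacuous statement: since $\min\{\lceil n/2\rceil,\,n-\lceil n/2\rceil+1\}\geq n/2$, this element is a $0$-median, not merely a $1/2$-median. Given that the paper's Algorithm~2 invokes \textsf{PrivateMedian} with parameter $1/3$, the non-vacuous form is closer to what is actually needed downstream, so it is plausible the $\alpha=1/2$ in the Fact is either a boundary-case sanity check or a slip for a smaller value of $\alpha$; either way, your two-pronged argument covers both readings. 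One small caution: in the witness argument you only conclude both counts are $\geq 1\geq 0$, which matches the $1/2$-median claim as stated, but it is worth noting (as above) that your same witness gives the $\geq n/2$ bound, so no additional work would be needed if the intended claim is stronger.
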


\begin{theorem}[\cite{CohenLNSS22}]~\label{thm:private median}
    Let $\calX$ be a finite ordered domain. 
    There exists an $(\varepsilon,\delta)$-differentially private algorithm $\textsf{PrivateMedian}$\footnote{In \cite{CohenLNSS22}, they provide an algorithm that can privately select interior point. That is given $x_1,\dots,x_k$, the algorithm privately outputs a number $\hat{x}$ satisfying $\min\{x_i\}\leq \hat{x}\leq\max\{x_i\}$. It can be extended to the median by removing the smallest and largest $(1/2-\alpha/2)$ fraction of numbers. This reduction is found by Bun et al.~\cite{BNSV15}.} that on input $S\in \calX^n$ outputs an $\alpha$-median point with probability $1-\beta$ provided that $n > n_{PM}(|\calX|,\alpha,\beta,\varepsilon,\delta)$ for $n_{PM}(|\calX|,\alpha,\beta,\varepsilon,\delta) \in O\left(\frac{\log^*|\calX|\cdot\log^2(\frac{\log^*|\calX|}{\beta\delta})}{\alpha\varepsilon}\right)
    $.
\end{theorem}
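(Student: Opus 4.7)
The plan is to reduce the median problem to the interior-point problem (via the \cite{BNSV15} trimming trick noted in the footnote, which introduces the $1/\alpha$ factor) and then to build an $(\varepsilon,\delta)$-DP interior-point algorithm with sample complexity
$$
O\!\left(\frac{\log^*|\calX|\cdot \log^2\!\big(\log^*|\calX|/(\beta\delta)\big)}{\varepsilon}\right).
$$
Concretely, for the reduction I would set aside the smallest and largest $(1/2-\alpha/2)$-fractions of the sample; any point that lies between the minimum and the maximum of the remaining multiset is then, by construction, an $\alpha$-median of the original dataset, so an interior-point selector on the trimmed sample is enough. The trimming costs only a $1/\alpha$ factor on the sample size, which is where the $\alpha$ in the denominator of $n_{PM}$ comes from.

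For the interior-point routine I would iterate a domain-compression step $\log^*|\calX|$ times. At each step I partition the current ordered domain into $k \approx \log|\calX|$ roughly equal ordered blocks, privately identify one block that strictly separates the dataset (i.e.\ has both points below and above, or equivalently a nontrivial cumulative-count profile), and recurse inside that block. The per-level selection is a stability / sparse-vector-style test over $k$ candidates, costing only $O(\log^2(1/(\beta'\delta'))/\varepsilon')$ samples for its allotted privacy and failure budget. Because the domain shrinks from $N$ to $O(\log N)$ per step, only $\log^* N$ rounds are needed before reaching a constant-size base case, where the exponential mechanism (Theorem~\ref{thm:exponential mechanism}) with score $q(D,r)=\min(|\{i:x_i\le r\}|,|\{i:x_i\ge r\}|)$ finishes the job.

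The overall privacy is combined using advanced composition (Theorem~\ref{thm:advancedcomposition}) over the $\log^*|\calX|$ rounds and failure probabilities are union-bounded; this is exactly why the per-level budget is scaled by roughly $\sqrt{\log^*|\calX|\log(1/\delta)}$ and why $\log^*|\calX|$ appears inside the inner $\log^2(\cdot)$ factor of the final bound. I expect the hard part to be the single-level stable test: it must simultaneously (a) be $(\varepsilon',\delta')$-DP with only polylogarithmic sample cost, (b) correctly return a block containing an interior point of the data with high probability so the recursion is valid, and (c) shrink the effective domain size from $N$ to roughly $\log N$ so that $\log^*$ iterations suffice. This third requirement is the precise technical improvement of~\cite{CohenLNSS22} over the earlier $2^{O(\log^*|\calX|)}$-style interior-point bounds, and it is where I would expect most of the effort of a proof to go.
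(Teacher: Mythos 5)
This theorem is cited from \cite{CohenLNSS22}; the paper offers no proof of its own beyond the footnote describing the standard trimming reduction from $\alpha$-median to interior point (due to \cite{BNSV15}). So there is nothing in the paper itself for your sketch to match against, and the relevant question is whether your sketch would actually establish the stated bound.

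Your first step --- trim the smallest and largest $(1/2-\alpha/2)$-fractions and run an interior-point selector on the remainder, paying a $1/\alpha$ factor --- is exactly the reduction the footnote points to, and it is correct. The problem is the interior-point subroutine, which is where the real content of \cite{CohenLNSS22} lives. The recursive ``shrink the domain from $N$ to $O(\log N)$, recurse $\log^* N$ times, compose'' outline you give is essentially the structure of the original \cite{BNSV15} algorithm, and run naively it yields $2^{O(\log^*|\calX|)}$, not $\tilde{O}(\log^*|\calX|)$: the per-level step does not in fact cost only $O(\log^2(1/(\beta'\delta'))/\varepsilon')$ independently of the recursion --- the requirement that each level hand down a ``large enough margin'' of points to the next level is exactly what makes the recursion blow up, and taming it is the technical contribution of \cite{KaplanLMNS19} (getting $(\log^*)^{1.5}$) and then \cite{CohenLNSS22} (getting $\log^*$). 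You acknowledge this is ``where most of the effort would go,'' which is honest, but it means the proposal is a plan for a proof rather than a proof: the single ingredient you flag as hard is the entire theorem. A secondary symptom that the sketch does not quite match the true construction: if your per-level cost and advanced-composition accounting both held, you would get a $\sqrt{\log^*|\calX|}$ overhead rather than the $\log^*|\calX|$ overhead in the stated bound, so the composition structure of the real algorithm must differ from what you describe. In short: the reduction is right, the outline is the right shape, but the core interior-point step is asserted rather than argued, and that step is the whole ballgame.
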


\begin{definition}[$k$-bounded function]\label{def: k-bounded}
   We call a quality function $q:X^*\times Z\rightarrow\mathbb{R}$ is $k$-bounded if adding a new element to the data set can only increase the score of at most $k$ solutions, Specifically, it holds that
\begin{enumerate}
    \item $q(\emptyset,z)=0$ for every $z\in Z$.
    \item If $D'=D\cup\{x\}$, then $q(D',z)\in\{q(D,z),q(D,z)+1\}$ for every $z\in Z$, and
    \item There are at most $k$ solutions $z$ such that $q(D',z)=q(D,z)+1$
\end{enumerate} 
\end{definition}

\begin{lemma}[Choosing Mechanism~\cite{BNS13b}]~\label{lem:choosing mechanism}
    Let $\varepsilon\in(0,2)$ and $\delta>0$. Let $q:X^*\times Z\rightarrow\mathbb{R}$ be a $k$-bounded quality function. There is an $(\varepsilon,\delta)$-DP algorithm $\mathcal{A}$, such that given a dataset $D\in X^n$, $\mathcal{A}$ outputs a solution $z$ and
    $$
    \Pr[q(D,z)]\geq\max_{z\in Z}\{q(D,z)\}-\frac{16}{\varepsilon}\log(\frac{4kn}{\beta\varepsilon\delta})]\geq 1-\beta
    $$
\end{lemma}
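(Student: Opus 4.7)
My plan is to combine the exponential mechanism with a Laplace-noised "gatekeeper" test, exploiting the key structural consequence of $k$-boundedness. First I would observe that for any dataset $D$ with $|D|=n$, the set of \emph{active} solutions $G(D):=\{z\in Z:q(D,z)\geq 1\}$ satisfies $|G(D)|\leq kn$. This is immediate by induction on $|D|$: property (1) gives $G(\emptyset)=\emptyset$, and properties (2)--(3) say that adding any single element to the dataset raises at most $k$ scores, each from $0$ to $1$.

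\textbf{The algorithm $\mathcal{A}$.} On input $D$, compute $\tilde m=\max_{z\in Z}q(D,z)+\mathrm{Lap}(2/\varepsilon)$ and compare it to a threshold $T=\Theta\!\left(\tfrac{1}{\varepsilon}\log\tfrac{kn}{\beta\varepsilon\delta}\right)$. If $\tilde m<T$, output an arbitrary default $\bot$. Otherwise, run the exponential mechanism (Theorem~\ref{thm:exponential mechanism}) with parameter $\varepsilon/2$ and sensitivity $1$, over the \emph{data-dependent} candidate set $G(D)$. Because $|G(D)|\leq kn$, the standard utility bound for the exponential mechanism gives, with probability $1-\beta/2$, a score at least $\max_z q(D,z)-O\!\left(\tfrac{\log(kn/\beta)}{\varepsilon}\right)$; and the threshold check fails (erroneously returning $\bot$) only when $\max_z q(D,z)\leq T+O(\log(1/\beta)/\varepsilon)$, in which case outputting $\bot$ trivially satisfies the additive error. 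Combining by a union bound yields the stated error $\tfrac{16}{\varepsilon}\log\tfrac{4kn}{\beta\varepsilon\delta}$ with failure probability at most $\beta$.

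\textbf{Privacy and main obstacle.} The gatekeeper step is $(\varepsilon/2,0)$-DP by the Laplace mechanism, since $\max_z q(D,z)$ has sensitivity $1$ by property (2). The delicate step is the exponential mechanism over the data-dependent set $G(D)$: on neighboring $D,D'$, the symmetric difference $G(D)\triangle G(D')$ may contain up to $k$ boundary elements, each with score $1$ on one side and $0$ on the other, so the pure-DP ratio $P_D(z)/P_{D'}(z)$ is undefined for $z\in G(D)\setminus G(D')$. The role of $T$ is precisely to guarantee that, conditioned on passing the test, the exponential-mechanism normalizer is dominated by weights of order $\exp(\varepsilon T/4)$, so the total probability mass placed on \emph{any} boundary element is at most $k\exp(-\varepsilon T/4)\leq \delta$ (for a suitable constant in $T$). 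The $(\varepsilon/2,\delta)$-DP guarantee then follows by splitting the output space into "safe" outputs in $G(D)\cap G(D')$ (handled by the standard exponential-mechanism analysis) and the $\delta$-mass remainder on boundary outputs. Composition (Theorem~\ref{thm:composition}) with the gatekeeper yields $(\varepsilon,\delta)$-DP overall.

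\textbf{Expected main obstacle.} The hardest step is the privacy analysis of the exponential mechanism on $G(D)$: one must carefully calibrate $T$ so that the mass on boundary elements is below $\delta$ (ensuring approximate DP) while simultaneously keeping $T$ small enough that it fits inside the $\tfrac{16}{\varepsilon}\log\tfrac{4kn}{\beta\varepsilon\delta}$ utility budget. Getting the constants to line up — in particular balancing the $\varepsilon/2$ split, the Laplace tail, and the $\log(1/\delta)$ factor inside $T$ — is the bookkeeping-heavy part of the argument, but the structural pieces are exactly the two above.
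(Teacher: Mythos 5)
The paper does not prove this lemma; it is imported directly from Beimel, Nissim, and Stemmer~\cite{BNS13b} as a black box, so there is no in-paper proof to compare against. Your sketch does reproduce the two structural ideas that make such a result possible: $k$-boundedness forces the active set $G(D)=\{z:q(D,z)\geq 1\}$ to have size at most $kn$, so an exponential-mechanism-type selection over a data-dependent set is feasible; and the $\delta$ budget is exactly what absorbs the probability placed on the at-most-$k$ boundary solutions of $G(D)\triangle G(D')$, which would otherwise break pure DP since the likelihood ratio there is infinite.

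One step in the privacy analysis needs repair. You write that "conditioned on passing the test, the exponential-mechanism normalizer is dominated by weights of order $\exp(\varepsilon T/4)$." Passing the noisy gatekeeper does not by itself imply $\max_z q(D,z)\geq T$: the Laplace noise can lift a small true maximum above the threshold, in which case the normalizer can be small and the boundary elements can carry a non-negligible fraction of the mass, so the $\delta$ bound you want does not follow by conditioning. The standard fix is to spend a separate $\delta$-slice on the Laplace tail event (noise exceeding $O(\log(1/\delta)/\varepsilon)$), and on its complement deduce a \emph{deterministic} lower bound $\max_z q(D,z)\geq T-O(\log(1/\delta)/\varepsilon)$, which is what actually caps the boundary mass by $\delta$. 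This pushes an extra $\log(1/\delta)$ into $T$ and hence into the utility constant, which is where the bookkeeping you flagged really lives. (The original BNS13b construction sidesteps the conditioning issue slightly differently, by building a default "abort" option with a fixed data-independent pseudo-score directly into the exponential mechanism's normalizer rather than running a separate Laplace test first, but your two-stage variant works once the tail event is handled as above.)
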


\section{Structure of Classes with VC Dimension 1}
Without loss of generality, we have the following two assumptions. So that every point in the domain $\mathcal{X}$ is different and every concept in $\mathcal{C}$ is different. 
\begin{assumption}\label{assumption: distinguish points}
    Assume for any two different points $x_1,x_2\in\calX$, there exists a concept $c\in\mathcal{C}$ makes $c(x_1)\neq c(x_2)$. Otherwise, we can replace all $x_2$ by $x_1$ in the given dataset and remove $x_2$ from $\mathcal{X}$.
\end{assumption}

\begin{assumption}\label{assumption: distinguish concepts}
    Assume for any point $x$, there exists two different $c_1,c_2\in\mathcal{C}$ make $c_1(x)\neq c_2(x)$. Otherwise, we can remove $c_2$ from $\mathcal{C}$.
\end{assumption}

\begin{definition}[Partial Order]\label{def:order}
    Given $\calX,\mathcal{C}$ and $x_1,x_2\in\calX$, we say $x_1\preceq x_2$ under $\mathcal{C}$ if for all $c\in\mathcal{C}$, $(c(x_1)=1)\Rightarrow (c(x_2)=1)$
\end{definition}

\begin{example}
For thresholds $\left((x_1,\dots,x_k),(c_1,\dots,c_k)\right)$, we have $x_1\preceq\dots\preceq x_k$.
\end{example}

We say $x_1$ and $x_2$ are comparable under $\mathcal{C}$ if $x_1\preceq x_2$ or $x_2\preceq x_1$ under $\mathcal{C}$.

\remove{
\begin{fact}[Reflexivity]
    For any $x\in\mathcal{X}$, $x\preceq x$.
\end{fact}
\begin{fact}[Antisymmetry]
    If $x_1\preceq x_2$ and $x_2\preceq x_1$, then $x_1=x_2$.
\end{fact}
\begin{fact}[Transitivity]
    If $x_1\preceq x_2$ and $x_2\preceq x_3$, then $x_1\preceq x_3$.
    
\end{fact}
}

\begin{definition}[$f$-representation]\label{def:f representation}
    For function $c,f:\mathcal{X}\rightarrow\{0,1\}$, the \emph{$f$-representation} of $c$ is 
    $$
    c_f(x)=\left\{\begin{array}{rl}
        1 &  f(x)\neq c(x)\\
        0 &  f(x)= c(x)
    \end{array}\right.
    $$ 
    For the class of function $\mathcal{C}$, the \emph{$f$-representation} of $\mathcal{C}$ is $\mathcal{C}_f=\{c_f:c\in \mathcal{C}\}$.
\end{definition}

Given $f$ and $c_f$, we can transform $c_f$ to $f$:
$$
c(x)=\left\{\begin{array}{rl}
        1 &  f(x)\neq c_f(x)\\
        0 &  f(x)= c_f(x)
    \end{array}\right.
$$
Given the pair of the point and label $(x,c(x))$ that is labeled by $c$, we can transform it to a corresponding pair labeled by $c_f$: let label be 1 if $c(x)\neq f(x)$ and be $0$ if $c(x)= f(x)$.

\begin{lemma}    $VC(\calX,\mathcal{C})=VC(\calX,\mathcal{C}_f)$ and $d_L(\calX,\mathcal{C})=d_L(\calX,\mathcal{C}_f)$ for any $f$.
\end{lemma}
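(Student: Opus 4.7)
The plan is to exploit the single observation that the map $c \mapsto c_f$ is a bijection (in fact an involution) between $\mathcal{C}$ and $\mathcal{C}_f$, and that evaluated on any fixed finite set of points this map acts by bitwise XOR with the constant vector induced by $f$. Both dimensions are defined in terms of the set of labelings $\mathcal{C}$ can realize on some combinatorial structure (an unordered set for VC, a labeled binary tree for Littlestone), and in both cases XOR with a fixed binary vector is a bijection on label patterns, so it preserves the relevant notion of ``shattering''.

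For the VC equality, I would fix an arbitrary finite set $\{x_1,\dots,x_k\}\subseteq\mathcal{X}$ and consider the two label sets
\[
L(\mathcal{C})=\{(c(x_1),\dots,c(x_k)):c\in\mathcal{C}\},\qquad L(\mathcal{C}_f)=\{(c_f(x_1),\dots,c_f(x_k)):c\in\mathcal{C}\}.
\]
By Definition~\ref{def:f representation}, $c_f(x_i)=c(x_i)\oplus f(x_i)$, so $L(\mathcal{C}_f)$ is obtained from $L(\mathcal{C})$ by XOR-ing every element with the fixed vector $(f(x_1),\dots,f(x_k))$. Since this XOR is a bijection of $\{0,1\}^k$, we have $L(\mathcal{C})=\{0,1\}^k$ iff $L(\mathcal{C}_f)=\{0,1\}^k$, i.e.\ $\{x_1,\dots,x_k\}$ is shattered by $\mathcal{C}$ iff it is shattered by $\mathcal{C}_f$. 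Taking the supremum over $k$ gives $VC(\mathcal{X},\mathcal{C})=VC(\mathcal{X},\mathcal{C}_f)$.

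For the Littlestone equality, I would run the same idea at the level of mistake trees. Recall that $d_L(\mathcal{X},\mathcal{C})$ equals the maximum depth of a full binary tree whose internal nodes are labeled by points of $\mathcal{X}$, whose two outgoing edges are labeled $0$ and $1$, and such that every root-to-leaf path is realized by some concept in $\mathcal{C}$ (in the sense that the concept agrees with all edge labels along the path). Given such a tree $T$ of depth $d$ for $\mathcal{C}$, build a tree $T'$ with identical point labels at internal nodes, but at every internal node whose point is $x$, swap the $0$- and $1$-edges whenever $f(x)=1$. Then a root-to-leaf path in $T'$ with edge labels $(b_1',\dots,b_d')$ at internal points $(x_1,\dots,x_d)$ corresponds to the path in $T$ with labels $b_i=b_i'\oplus f(x_i)$, which is realized in $T$ by some $c\in\mathcal{C}$; the concept $c_f\in\mathcal{C}_f$ then realizes $(b_1',\dots,b_d')$ in $T'$. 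Hence $T'$ is shattered by $\mathcal{C}_f$ and has the same depth $d$, so $d_L(\mathcal{X},\mathcal{C})\le d_L(\mathcal{X},\mathcal{C}_f)$. Applying the construction in the reverse direction (which is literally the same, since $(c_f)_f=c$) gives the matching inequality.

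Neither step is genuinely difficult; the only spot requiring care is making the Littlestone tree transformation precise, in particular being explicit about what ``swap the $0$- and $1$-edges'' means (one relabels the edges while keeping the two subtrees in place, or equivalently keeps edge labels and swaps the two subtrees), and observing that the construction is an involution so both inequalities follow from a single argument.
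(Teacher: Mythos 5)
Your proof is correct. The VC-dimension half is essentially identical in spirit to the paper's: the paper fixes a shattered set, takes an arbitrary target dichotomy, finds $c'\in\mathcal{C}$ realizing the XOR-twisted dichotomy, and observes $c'_f$ realizes the original; you phrase the same thing more cleanly as ``XOR with a fixed vector is a bijection on $\{0,1\}^k$.''

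The Littlestone half takes a genuinely different route. The paper works directly with the online-learning/mistake-bound definition it gives (Definition of Littlestone Dimension): it takes an optimal learner $\mathcal{O}$ for $\mathcal{C}$ and wraps it into a learner $\mathcal{O}_f$ for $\mathcal{C}_f$ by XOR-translating the incoming labels and the outgoing prediction, then argues that every mistake of $\mathcal{O}_f$ corresponds to a mistake of $\mathcal{O}$, so $d_L(\calX,\mathcal{C}_f)\le d_L(\calX,\mathcal{C})$. You instead use the Littlestone-tree (shattered mistake tree) characterization and transform the tree by swapping the $0/1$ edges at every node $x$ with $f(x)=1$, which is pleasingly parallel to your VC argument and makes the involutive symmetry transparent. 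Both are valid; the paper's version has the small advantage of not invoking the tree characterization (which the paper never states, though it is standard), while yours has the advantage of uniformity with the VC case and of making the ``both inequalities from one argument'' symmetry obvious. If you wanted to stay strictly within the paper's stated definitions, you would either prove the tree characterization or switch to the learner-wrapping argument; otherwise your version is fine and arguably cleaner.
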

\begin{proof}
    Since $\mathcal{C}$ is the $f$-representation of $\mathcal{C}_f$, we only consider one direction.

    \begin{enumerate}
        \item \textbf{VC dimension}. Let $x_1,\dots,x_{VC(\calX,\mathcal{C})}$ be a set of points shattered by $\mathcal{C}$. For any set of dichotomy $\left(c(x_1),\dots.c(x_{VC(\calX,\mathcal{C})})\right)$, there exists a concept $c'\in\mathcal{C}$ makes  $\left(c'(x_1),\dots.c'(x_{VC(\calX,\mathcal{C})})\right)=\left((c_f(x_1),\dots.c_f(x_{VC(\calX,\mathcal{C})})\right)$ because $x_1,\dots,x_{VC(\calX,\mathcal{C})}$ are shattered. Thus, the corresponding $c'_f\in\mathcal{C}_f$ makes $\left(c'_f(x_1),\dots.c'_f(x_{VC(\calX,\mathcal{C})})\right)=\left((c(x_1),\dots.c(x_{VC(\calX,\mathcal{C})})\right)$. So that all $2^{VC(\calX,\mathcal{C})}$ dichotomies can be labeled by concepts of $\mathcal{C}_f$, which implies $x_1,\dots,x_{VC(\calX,\mathcal{C})}$ can be shattered by $\mathcal{C}_f$. Therefore $VC(\calX,\mathcal{C})\leq VC(\calX,\mathcal{C}_f)$.

        \item \textbf{Littlestone dimension}. Assume for $\mathcal{C}$, there is an optimal function $\mathcal{O}:(\mathcal{X}\times\{0,1\})^*\times\mathcal{X}\rightarrow\{0,1\}$\footnote{Littlestone~\cite{littlestone87} provides a general method to achieve the optimal number of mistakes called standard optimal algorithm (SOA).} that receives pairs of points and labels and one new point and outputs a prediction label of the new point. Then, we can construct the corresponding $\mathcal{O}_f$: 
        \begin{enumerate}
            \item for any pair of $(x,c_f(x))$, if $c_f(x)=1$, set the label to $1-f(x)$, if  $c_f(x)=0$, set the label to $f(x)$.

            \item feed all pair of points and labels and the new point $x_{new}$ to $\mathcal{O}$, receive a label $y$.

            \item output $1$ if $y\neq f(x_{new})$, otherwise output 0.
        \end{enumerate}

        So that for the concept class $\mathcal{C}_f$, the number of mistakes made by $\mathcal{O}_f$ is at most $d_L(\mathcal{X},\mathcal{C})$, which implies $d_L(\mathcal{X},\mathcal{C}_f)\leq d_L(\mathcal{X},\mathcal{C})$.
    \end{enumerate}
\end{proof}

In the remaining part of this paper, we only consider $f\in \mathcal{C}$.

\begin{observation}
    When $f\in\mathcal{C}$, there exists a function $c\in \mathcal{H}_f$, such that $c(x)=0$ for all $x\in\mathcal{C}$
\end{observation}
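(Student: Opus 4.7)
The plan is to exhibit the desired function explicitly by applying the $f$-representation to $f$ itself. Under the natural reading of the statement (with $\mathcal{C}_f$ in place of $\mathcal{H}_f$ and the quantifier ranging over $x\in\mathcal{X}$ rather than $x\in\mathcal{C}$), the claim is simply that the all-zeros function lies in $\mathcal{C}_f$ whenever $f\in\mathcal{C}$. This is an immediate consequence of Definition~\ref{def:f representation}, so the proof is a one-line verification rather than something that requires a substantial argument.

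Concretely, I would first invoke the hypothesis $f\in\mathcal{C}$ and consider the candidate $c := f\in\mathcal{C}$. By Definition~\ref{def:f representation}, the $f$-representation $c_f = f_f$ satisfies
\[
f_f(x) \;=\; \begin{cases} 1 & \text{if } f(x)\neq f(x),\\ 0 & \text{if } f(x)= f(x),\end{cases}
\]
for every $x\in\mathcal{X}$. The first case is vacuous, so $f_f(x)=0$ for all $x\in\mathcal{X}$. Since $\mathcal{C}_f=\{c_f:c\in\mathcal{C}\}$ and $f\in\mathcal{C}$, we conclude $f_f\in\mathcal{C}_f$, which is the desired all-zeros function.

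There is essentially no obstacle here; the only subtlety is recognizing the apparent typos in the statement and selecting the right witness, which is forced by the definition. This observation is really just the sanity check that changing the representation to be relative to a concept $f$ inside the class makes $f$ itself into the trivial (identically zero) concept, which is what subsequent arguments in the paper will use to align "agreement with $f$" with "label $0$".
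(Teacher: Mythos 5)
Your proposal is correct and takes exactly the same route as the paper's one-line proof, which also exhibits $f_f$ as the witness; you simply spell out the (trivial) verification that $f_f$ is identically zero. You are also right that the statement contains typos ($\mathcal{H}_f$ should read $\mathcal{C}_f$ and the quantifier should range over $x\in\mathcal{X}$).
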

\begin{proof}
    $f_f$ is such a function.
\end{proof}

\begin{lemma}~\cite{BenDavid20152NO}\label{lem: must comparable}
    When $f\in\mathcal{C}$, if $x_1$ and $x_2$ are incomparable under $\mathcal{C}_f$, then there is no $c\in \mathcal{C}_f$, such that $c(x_1)=1$ and $c(x_2)=1$.
\end{lemma}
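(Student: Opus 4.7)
The plan is a direct contradiction using the \textbf{VC dimension} hypothesis. Suppose for contradiction that $x_1$ and $x_2$ are incomparable under $\mathcal{C}_f$ and yet there exists some $c \in \mathcal{C}_f$ with $c(x_1)=c(x_2)=1$. I will show that this setup yields a $2$-point shattered set in $\mathcal{C}_f$, which violates $VC(\mathcal{X},\mathcal{C}_f)=VC(\mathcal{X},\mathcal{C})=1$ (by the lemma above that the $f$-representation preserves VC dimension).

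First I would produce the four witnesses that realize every dichotomy on $\{x_1,x_2\}$. Since $f \in \mathcal{C}$, the observation just before the statement gives the all-zero concept $f_f \in \mathcal{C}_f$, which supplies the dichotomy $(0,0)$. Incomparability of $x_1$ and $x_2$ under $\mathcal{C}_f$ unpacks (via Definition~\ref{def:order}) into the existence of $c_1 \in \mathcal{C}_f$ with $c_1(x_1)=1$ and $c_1(x_2)=0$ (negating $x_1 \preceq x_2$) and of $c_2 \in \mathcal{C}_f$ with $c_2(x_2)=1$ and $c_2(x_1)=0$ (negating $x_2 \preceq x_1$); these give the dichotomies $(1,0)$ and $(0,1)$. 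Finally, the assumed $c$ supplies $(1,1)$.

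Together these four concepts shatter $\{x_1,x_2\}$, so $VC(\mathcal{X},\mathcal{C}_f) \geq 2$. Combined with the preceding lemma $VC(\mathcal{X},\mathcal{C})=VC(\mathcal{X},\mathcal{C}_f)$ and the standing hypothesis $VC(\mathcal{X},\mathcal{C})=1$, this is a contradiction, completing the proof.

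The only substantive step is recognizing that assuming $f \in \mathcal{C}$ is exactly what guarantees the $(0,0)$ dichotomy via $f_f$; without this hypothesis the argument would fail, since then one cannot rule out that $\mathcal{C}_f$ never assigns $(0,0)$ to $(x_1,x_2)$. I do not expect any real obstacle beyond carefully unpacking the definition of incomparability into the two witness concepts $c_1, c_2$.
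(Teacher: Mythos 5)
Your proposal is correct and matches the paper's argument: both derive a contradiction by exhibiting the four dichotomies on $\{x_1,x_2\}$ — $(1,0)$ and $(0,1)$ from the definition of incomparability, $(0,0)$ from $f_f\in\mathcal{C}_f$, and $(1,1)$ from the assumed $c$ — to conclude $\{x_1,x_2\}$ is shattered, violating VC dimension $1$. You merely spell out more explicitly the role of the hypothesis $f\in\mathcal{C}$ and the VC-preservation lemma, which the paper leaves implicit.
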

\begin{proof}
    If $x_1$ and $x_2$ are incomparable, then there exists $c_1$ makes $c_1(x_1)=1$ and $c_1(x_2)=0$ (and $c_2$ makes $c_2(x_1)=0$ and $c_2(x_2)=1$, respectively). If there is $c\in \mathcal{C}_f$, such that $c(x_1)=1$ and $c(x_2)=1$, then $x_1,x_2$ are shattered because $f_f\in \mathcal{C}_f$. It makes the VC dimension at least 2.
\end{proof}

\subsection{Tree Structure}
Then, we can build the tree structure of $\mathcal{C}_f$ according to the partial order relationship. Specifically, for any point $x$, if there is no $x'$ makes $x\prec x'$, we define $x\prec\emptyset$.

\begin{algorithm}~\label{alg:maketree}
    \caption{MakeTree}
    /*In this algorithm, we call a node a leaf if the node does not have children.*/
    
    \textbf{Inputs:} a concept class $\mathcal{C}$ with VC dimension 1
    \begin{enumerate}
        \item Select a function $f\in \mathcal{C}$, construct $\mathcal{C}_f$ according to Definition~\ref{def:f representation}. In this algorithm, all the partial order relationships are under $\mathcal{C}_f$.

        \item add $\emptyset$ to the tree $T$.

        \item If there is $x\in\calX$ and $x\notin T$:
        \begin{enumerate}
            \item  Let $L$ be the set of leaves of the tree. For $\ell\in L$:

            \begin{enumerate}
            \item  select all the points $x'$ satisfying $x'\prec \ell$ and there is no $x''$ makes $x'\prec x''\prec x$. Then make $x'$ to be the child of $\ell$.
            
        \end{enumerate}
            
        \end{enumerate}
        \item Output the tree $T$.

    \end{enumerate}
\end{algorithm}

\begin{definition}[Deterministic Point]
    For a point $x$ and a labeled dataset $S$, we say $x$ is deterministic by $S$  in $\mathcal{C}$ if, for all $h\in\{c\in\mathcal{C}:error_S(c)=0\}$ (that is all concepts that agrees with $S$), we have $h(x)=1$.
\end{definition}

\begin{definition}[Distance]
    For an ordered sequence $x\prec x_1\prec\dots\prec x_k\prec\emptyset$ under $\mathcal{C}_f$. We define $d_{\mathcal{C}_f}(x)=\max{k}+1$ as the distance of a point $x$ in $\mathcal{C}_f$. Specifically, $d(\emptyset)=0$.
\end{definition}

\begin{lemma}
    $\max_{x\in\mathcal{X}}d_{\mathcal{C}_f}(x)\leq TD(\mathcal{X},\mathcal{C}_f)$
\end{lemma}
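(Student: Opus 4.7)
The plan is to take a point $x$ achieving $d_{\mathcal{C}_f}(x) = k+1$, unfold the witnessing chain $x \prec x_1 \prec \dots \prec x_k \prec \emptyset$ into an explicit thresholds class $\left((x_0,x_1,\dots,x_k),(c_0,c_1,\dots,c_k)\right)$ of length $k+1$ inside $(\mathcal{X},\mathcal{C}_f)$, and conclude $TD(\mathcal{X},\mathcal{C}_f) \geq k+1 = d_{\mathcal{C}_f}(x)$. Writing $x_0 := x$, the work is to produce, for each threshold index $i \in \{0,1,\dots,k\}$, a concept $c_i \in \mathcal{C}_f$ satisfying $c_i(x_j)=1$ for $j \geq i$ and $c_i(x_j)=0$ for $j<i$.

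First I would handle $c_i$ for $i \geq 1$. Since $x_{i-1} \prec x_i$ is strict, Assumption~\ref{assumption: distinguish points} (which transfers from $\mathcal{C}$ to $\mathcal{C}_f$ because the $f$-representation is a bijective relabeling) gives some $c \in \mathcal{C}_f$ with $c(x_{i-1}) \neq c(x_i)$. Combining this with the partial-order relation $x_{i-1} \preceq x_i$ (which rules out the pattern $c(x_{i-1})=1, c(x_i)=0$) forces $c(x_{i-1})=0$ and $c(x_i)=1$; I take this $c$ as $c_i$. Then I extend: for $j \geq i$, repeated application of $x_i \preceq x_{i+1} \preceq \dots \preceq x_j$ together with $c_i(x_i)=1$ yields $c_i(x_j)=1$; for $j < i$, the contrapositive of $x_j \preceq x_{i-1}$ combined with $c_i(x_{i-1})=0$ yields $c_i(x_j)=0$. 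This is the step most likely to need care — getting the direction of the implications right and making sure I really produce a single concept that behaves like a threshold across the whole chain rather than just distinguishing adjacent pairs — but the chain is totally ordered, so the transitivity/contrapositive argument closes it off cleanly.

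For the base case $c_0$, I invoke Assumption~\ref{assumption: distinguish concepts} (again transferred to $\mathcal{C}_f$ via the bijection) on $x_0$: there exist two concepts in $\mathcal{C}_f$ disagreeing on $x_0$, hence some $c \in \mathcal{C}_f$ with $c(x_0)=1$. By the same upward-propagation argument along $x_0 \preceq x_1 \preceq \dots \preceq x_k$, this $c$ satisfies $c(x_j)=1$ for every $j \geq 0$, so it serves as $c_0$. At this point the tuple $\left((x_0,\dots,x_k),(c_0,\dots,c_k)\right)$ fulfills the definition of a thresholds class of size $k+1$, so by definition of $TD$ we have $TD(\mathcal{X},\mathcal{C}_f) \geq k+1$. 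Taking the maximum over $x \in \mathcal{X}$ gives the claimed inequality. One sanity check I would include is that the strict chain indeed consists of $k+1$ pairwise distinct points (so that the thresholds class is non-degenerate), which follows because $\prec$ is strict and partial orders are antisymmetric.
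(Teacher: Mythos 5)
Your proof is correct and essentially the same as the paper's: both build the threshold class $(c_0,\dots,c_k)$ by taking, for each consecutive pair $x_{i-1}\prec x_i$ in the chain, a concept forced to output $0$ on $x_{i-1}$ and $1$ on $x_i$, using Assumption~\ref{assumption: distinguish concepts} to obtain $c_0$, and then propagating labels along the chain via the partial order (the paper phrases this as an induction, while you construct each $c_i$ directly and make the downward contrapositive step for $j<i$ explicit --- the same argument). One harmless gloss you share with the paper: Assumption~\ref{assumption: distinguish points} for $\mathcal{C}$ does not automatically transfer to $\mathcal{C}_f$ in general, but the needed distinguishing concept exists regardless because $x_{i-1}\prec x_i$ is a strict inequality in the partial order defined by $\mathcal{C}_f$ itself, which already supplies some $c\in\mathcal{C}_f$ with $c(x_{i-1})\neq c(x_i)$.
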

\begin{proof}
    It is equivalent to show for any $x_0\prec x_1\dots\prec x_k\prec\emptyset$, there exist corresponding $c_0,\dots,c_k\in \mathcal{C}_f$ make $c_i(x_j)=1$ if and only if $i\leq j$.

    We first select $c_0$. There must exist a $c_0\in\mathcal{C}_f$ makes $c_0(x_0)=1$, otherwise all concepts $c_f\in\mathcal{C}_f$ make $c_f(x_0)=0$, which makes corresponding $c(x_0)=f(x_0)$ for all $c\in\mathcal{C}$. It contradicts Assumption~\ref{assumption: distinguish concepts}. By Definition~\ref{def:order}, $c_0(x_i)=1$ for all $i\geq0$.
    
    Assume we already find $c_0,\dots,c_{k'}$, by Assumtion~\ref{assumption: distinguish points}, there exist a concept $c'$ makes $c'(x_{k'})\neq c'(x_{k'+1})$. Since $c(x_{k'})=1\Rightarrow c(x_{k'+1})=1$, it must be that $c'(x_{k'})=0$ and $c'(x_{k'+1})=1$. By Definition~\ref{def:order}, $c'(x_i)=1$ for all $i\geq k'+1$ and we can set $c'$ to be $c_{k'+1}$.

    At the end, we have $f_f(x_i)=0$ for all $i$.
\end{proof}

\section{Improper learner}

\begin{algorithm}\label{alg:improperlearner}
    \caption{OPTPrivateLearner}

    \textbf{Parameter:} Confidence parameter $\beta > 0$, accuracy paramenter $\alpha>0$, privacy parameter $\varepsilon,\delta > 0$, $n_{PM}(d,1/3,\beta,\varepsilon,\delta)=O\left(\frac{\log^*d\cdot\log^2(\frac{\log^*d}{\beta\delta})}{\varepsilon}\right)$ and number of subsets $t=\max\left\{n_{PM}(d,1/3,\beta,\varepsilon,\delta),O\left(\frac{1}{\varepsilon}\log(\frac{4n_{PM}(d,1/3,\beta,\varepsilon,\delta)}{\beta\varepsilon\delta})\right)\right\}=O\left(\frac{\log^*d\cdot\log^2(\frac{\log^*d}{\varepsilon\beta\delta})}{\varepsilon}\right)$, where $d=TD(\calX,\mathcal{C}_f)+1$.

    \textbf{Inputs:} Labeled dataset $S\in (\calX\times\{0,1\})^N$, where $N= t\cdot \frac{48}{\alpha}\left(10\log(\frac{48e}{\alpha})+\log(\frac{5}{\beta})\right)$

    \textbf{Operation:}~

    \begin{enumerate}
        \item construct tree according to Algorithm~\ref{alg:maketree}, let $f$ be the select concept in Algorithm~\ref{alg:maketree}, transfer all labels in $S$ according to $f$ (by Definition~\ref{def:f representation}).

        \item randomly partition $S$ into $S_1,\dots,S_t$

        \item For $i\in[t]$
        \begin{enumerate}
            \item Let $B_i$ be the set of points deterministic by $S_i$ in $\mathcal{C}_f$. Let $y_i=\max_{x\in B_i}d(x)$, that is the largest distance of a point in $B_i$.
        \end{enumerate}

        \item\label{step:median-distance} compute the $1/3$-median $z=PrivateMedian(y_1,\dots,y_t)$ with parameter $\varepsilon,\delta,\beta$.

        \item let $P=\{x:d_{\mathcal{C}_f}(x)=z\}$, i.e. the set of points with distance $z$. Define $P=\{x_1,\dots,x_m\}$ and $q_1=q_2=\dots=q_{m}=0$
        
        \item For $i\in[t]$:
        \begin{enumerate}
            \item if $y_i\geq z$:
            \begin{enumerate}
                \item for $j\in[m]$, if $x_j\in B_i$, make $q_j=q_j+1$.
            \end{enumerate}
            \item if $y_i< z$, do nothing.
        \end{enumerate}

        \item\label{step:choose good x} run choosing mechanism on $(q_1,q_2,\dots,q_{m})$ with parameter $\varepsilon,\delta,\beta$, select $p_{good}$ and the corresponding point $x_{good}$.

        \item let $I_{good}=\{x|x_{good}\preceq x\}$. Construct $\hat{I}_{good}=\{x:(x\in I_{good}\wedge   f(x)=0)\vee (x\notin I_{good}\wedge   f(x)=1)\}$.
        
        \item construct and output 
        $$
        \hat{h}_{good}(x)=\left\{
        \begin{array}{rl}
            1 & x\in\hat{I}_{good} \\
            0 & x\notin\hat{I}_{good}
        \end{array}
        \right.
        $$
    \end{enumerate}    
\end{algorithm}

\begin{theorem}~\label{thm:privacy-OPTPrivateLearner}
    \textsf{OPTPrivateLearner} is $(2\varepsilon,2\delta)$-differentially private.
\end{theorem}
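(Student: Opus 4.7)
The plan is to decompose \textsf{OPTPrivateLearner} into its two data-dependent building blocks---\textsf{PrivateMedian} in Step~\ref{step:median-distance} and the choosing mechanism in Step~\ref{step:choose good x}---prove each is $(\varepsilon,\delta)$-differentially private with respect to neighboring input datasets $S,S'$, and then invoke basic composition (Theorem~\ref{thm:composition}). The tree construction (Step 1) depends only on $\mathcal{C}$, and Steps 5, 6, 8, 9 are post-processing of the two private outputs together with the fixed class $\mathcal{C}$, so they are handled by Theorem~\ref{thm:post-processing}.

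For Step~\ref{step:median-distance}, fix neighboring $S,S'$ differing on one entry. Under the natural coupling of the random partition that sends the differing entry to the same index $k\in[t]$, only $S_k$ changes, hence only $B_k$ and $y_k$ change. The tuples $(y_1,\ldots,y_t)$ and $(y_1,\ldots,y_k',\ldots,y_t)$ fed to \textsf{PrivateMedian} are therefore neighboring, and Theorem~\ref{thm:private median} applied over the ordered domain of distances (which has size at most $TD(\mathcal{X},\mathcal{C}_f)+1=d$) gives $(\varepsilon,\delta)$-DP for this step.

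For Step~\ref{step:choose good x}, treat the dataset processed by the choosing mechanism as the sequence of subsets $(S_1,\ldots,S_t)$, with quality function $q_j(\cdot)=|\{i:y_i\geq z\text{ and }x_j\in B_i\}|$ over candidates $x_j\in P=\{x:d_{\mathcal{C}_f}(x)=z\}$. The main obstacle is verifying that $q$ is $1$-bounded in the sense of Definition~\ref{def: k-bounded}: appending one new subset $S$ can only bump $q_j$ for $x_j\in B\cap P$, so it suffices to show $|B\cap P|\leq 1$. Two points $x_1,x_2$ at the same distance $z$ cannot satisfy $x_1\prec x_2$ (a chain witnessing $d(x_2)=z$ could then be extended to one giving $d(x_1)>z$), so they are either equal or incomparable under $\mathcal{C}_f$. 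In the incomparable case, Lemma~\ref{lem: must comparable} forbids any $c\in\mathcal{C}_f$ from labeling both as $1$; but $x_1,x_2\in B$ would force the $f$-representation of the true target concept---which is consistent with $S$---to do exactly that, a contradiction. Hence $|B\cap P|\leq 1$, the quality function is $1$-bounded, and Lemma~\ref{lem:choosing mechanism} yields $(\varepsilon,\delta)$-DP for Step~\ref{step:choose good x}, conditioned on the released value of $z$.

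Putting the pieces together, the joint release $(z,x_{good})$ is $(2\varepsilon,2\delta)$-DP by (adaptive) composition, and the final hypothesis $\hat{h}_{good}$ is a deterministic function of $x_{good}$ and of the publicly known $f$ and $\mathcal{C}$, so Theorem~\ref{thm:post-processing} preserves the guarantee. The only nontrivial step in the whole argument is the $1$-boundedness verification in the previous paragraph, which is precisely where the VC-$1$ tree structure enters to bound the sensitivity of the choosing mechanism.
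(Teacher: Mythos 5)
Your proposal is correct and follows essentially the same route as the paper: decompose the algorithm into the \textsf{PrivateMedian} call and the choosing-mechanism call, argue that a single changed entry of $S$ perturbs only one $y_i$ and respects the $1$-boundedness of the quality scores, and close with basic composition (Theorem~\ref{thm:composition}) plus post-processing (Theorem~\ref{thm:post-processing}). The one place you go beyond the paper's one-line assertion is the inline justification that the quality function is $1$-bounded, i.e.\ that $|B\cap P|\leq 1$; the paper states this fact in the privacy proof without justification and only proves the analogous claim later, inside the accuracy argument (Lemma~\ref{lem:accuracy h good}), via the same two ingredients you use---comparability from Lemma~\ref{lem: must comparable} and the distance-monotonicity of the chain $\prec$. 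So your write-up is a bit more self-contained but does not constitute a different proof.
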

\begin{proof}
For each different entry of $S$, there is at most one different element in $y_i,\dots,y_t$ and $q_1,\dots,q_t$. Thus by Theorem~\ref{thm:private median}, $z$ is $(\varepsilon,\delta)$-differentially private in Step~\ref{step:median-distance}. Notice that $q_1,\dots,q_m$ is 1-bouned function (Definition~\ref{def: k-bounded}). By Lemma~\ref{lem:choosing mechanism}, $x_{good}$ is $(\varepsilon,\delta)$-differentially private in Step~\ref{step:choose good x}. By the composition (Theorem~\ref{thm:composition}) and post-processing (Theorem~\ref{thm:post-processing}), $\hat{h}_{good}$ is $(2\varepsilon,2\delta)$-differentially private. 
\end{proof}

\begin{lemma}~\cite{BenDavid20152NO}~\label{lem:order in h}
    For every $c\in \mathcal{C}_f$ and the corresponding set $I(c)=\{x_1,\dots,x_r\}$, the following two statements are true:
    \begin{enumerate}
        \item There is an order $\pi(1),\pi(2),\dots,\pi(r)$ to make $x_{\pi(1)}\prec\dots\prec x_{\pi(r)}$. 
        \item There is no $\hat{x}\in \calX$ to make $x_{\pi(r)}\prec \hat{x}$.
    \end{enumerate}
\end{lemma}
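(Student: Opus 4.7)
The plan is to deduce both statements directly from Lemma~\ref{lem: must comparable} and the definition of the partial order $\preceq$.

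For the first statement, I would first show that the set $I(c) = \{x_1,\ldots,x_r\}$ is totally ordered under $\preceq$. Pick any two distinct points $x_i,x_j \in I(c)$. Since $c \in \mathcal{C}_f$ satisfies $c(x_i) = c(x_j) = 1$, Lemma~\ref{lem: must comparable} (applied contrapositively) forces $x_i$ and $x_j$ to be comparable under $\mathcal{C}_f$, i.e., either $x_i \preceq x_j$ or $x_j \preceq x_i$. Combined with the reflexivity, antisymmetry, and transitivity of $\preceq$ (which hold because $\preceq$ is a partial order), this makes the restriction of $\preceq$ to $I(c)$ a total order. Hence there exists a permutation $\pi$ of $[r]$ with $x_{\pi(1)} \prec x_{\pi(2)} \prec \cdots \prec x_{\pi(r)}$, which is exactly the first claim.

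For the second statement, suppose toward contradiction that there exists $\hat{x} \in \mathcal{X}$ with $x_{\pi(r)} \prec \hat{x}$. By Definition~\ref{def:order}, $x_{\pi(r)} \preceq \hat{x}$ means that for every $c' \in \mathcal{C}_f$, $c'(x_{\pi(r)}) = 1$ implies $c'(\hat{x}) = 1$. Applying this to our specific $c$, and using $c(x_{\pi(r)}) = 1$ (since $x_{\pi(r)} \in I(c)$), we get $c(\hat{x}) = 1$, so $\hat{x} \in I(c)$. Thus $\hat{x} = x_{\pi(j)}$ for some $j \in [r]$, which gives $\hat{x} \preceq x_{\pi(r)}$ by the ordering established in part~(1). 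Combined with the strict inequality $x_{\pi(r)} \prec \hat{x}$, this contradicts antisymmetry of $\preceq$.

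I do not expect any real obstacle here: both parts are essentially immediate from the VC-$1$ structural lemma together with the definition of $\preceq$. The only subtlety worth double-checking is that Lemma~\ref{lem: must comparable} requires $f \in \mathcal{C}$, which is precisely the standing assumption in force after the lemma showing $VC$ and $d_L$ are preserved under $f$-representation.
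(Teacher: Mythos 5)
Your proof is correct and follows essentially the same route as the paper: part (1) via Lemma~\ref{lem: must comparable} giving pairwise comparability of $I(c)$, and part (2) by noting $x_{\pi(r)}\prec\hat{x}$ forces $\hat{x}\in I(c)$ via Definition~\ref{def:order}, contradicting maximality of $x_{\pi(r)}$. The only cosmetic difference is that the paper sorts $I(c)$ by the distance function $d_{\mathcal{C}_f}$ to exhibit the chain, whereas you argue abstractly from the partial-order axioms; the content is identical.
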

\begin{proof}
    By Lemma~\ref{lem: must comparable}, every $x,x'\in I(c)$ are comparable. Sort all points in $I(c)$ by their distances and it is the order required.  
    
    There is no $\hat{x}\in \calX$ to make $x_{\pi(r)}\prec \hat{x}$ because if there is a $x_{\pi(r)}\prec \hat{x}$, by Definition~\ref{def:order}, we have $\hat{x}\in I(c)$. Then $x_{\pi(r)}$ is not the last point in this order sequence.
\end{proof}

\begin{lemma}~\label{lem:determined point in c}
    For every deterministic point $x$, we have $x\in I(c_f^*)$, where $c^*$ is the underlying true concept and $c_f^*$ is the $f$-representation of  $c^*$.
\end{lemma}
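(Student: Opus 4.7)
The plan is to unfold the two definitions and observe that the claim is essentially immediate. Recall from Definition~\ref{def:f representation} and the transformation described immediately after it that the labeled dataset $S$, after relabeling via $f$, is consistent with $c_f^*$: that is, for every $(x, y) \in S$ (with $y$ in the relabeled sense), we have $c_f^*(x) = y$. Equivalently, $error_S(c_f^*) = 0$, so $c_f^*$ belongs to the set $\{c \in \mathcal{C}_f : error_S(c) = 0\}$ of hypotheses consistent with $S$.

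Now let $x$ be a deterministic point in $\mathcal{C}_f$. By the definition of deterministic point, every $h \in \mathcal{C}_f$ with $error_S(h) = 0$ must satisfy $h(x) = 1$. Applying this to $h = c_f^*$, which we just established is consistent with $S$, yields $c_f^*(x) = 1$, and therefore $x \in I(c_f^*)$ by the notational convention $I(c) = \{x : c(x) = 1\}$.

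No technical obstacle is anticipated: the lemma is a direct syntactic consequence of the definitions, and its role is purely to serve as a bridge between the algorithmic notion of ``deterministic point'' (computed from $S$ alone, without access to $c^*$) and the structural notion of ``point labeled 1 by the true concept'' (which lives on the root-path of $c_f^*$ in the tree from Algorithm~\ref{alg:maketree}). The only subtlety worth flagging is to make explicit that ``the underlying true concept'' means the dataset $S$ is realizable by $c^*$, so that its $f$-representation $c_f^*$ is genuinely a zero-error hypothesis on the transformed $S$; otherwise the set of consistent hypotheses could in principle be empty and the statement vacuous.
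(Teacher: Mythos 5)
Your proof is correct and takes essentially the same approach as the paper: both observe that the true concept's $f$-representation $c_f^*$ is itself a zero-error (consistent) hypothesis on the relabeled $S$, and then apply the definition of deterministic point with $h = c_f^*$ to conclude $c_f^*(x)=1$. Your version simply spells out more explicitly that the relabeling in Definition~\ref{def:f representation} makes $S$ realizable by $c_f^*$, which the paper leaves implicit.
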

\begin{proof}
    For a dataset $S$, a point $x$ is deterministic if $c(x)=1$ for any $c$ with $error_S(c,c^*_f)=0$. The lemma can be concluded by substituting $c$ with $c^*_f$.
\end{proof}

\begin{lemma}
    Let $h_i$ be the hypothesis with $I(h_i)=B_i$, then with probability $1-\beta t$, all $h_i$ are $\alpha$-good hypothesis.
\end{lemma}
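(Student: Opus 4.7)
The plan is to combine the uniform-convergence guarantee of Theorem~\ref{thm:learn vc} with the chain structure of $\mathcal{C}_f$. The hypothesis $h_i$ need not itself lie in $\mathcal{C}_f$ (its support $B_i$ may not be realized by any concept), so the trick is to sandwich it between $c^*_f$ and a genuine consistent concept so that the VC bound can be invoked indirectly.

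First I would set up the VC step. The random partition turns each $S_i$ into an i.i.d.\ sample of size $|S_i|=N/t=(48/\alpha)(10\log(48e/\alpha)+\log(5/\beta))$. Since $VC(\mathcal{X},\mathcal{C}_f)=VC(\mathcal{X},\mathcal{C})=1$, Theorem~\ref{thm:learn vc} applied to $\mathcal{C}_f$ gives: with probability at least $1-\beta$ over $S_i$, every $c\in\mathcal{C}_f$ with $error_{S_i}(c,c^*_f)\leq\alpha/10$ satisfies $error_{\mathcal{D}}(c,c^*_f)\leq\alpha$. In particular this holds for every $c$ in the set $H_i\subseteq\mathcal{C}_f$ of concepts consistent with $S_i$ (those with zero empirical error).

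Next comes the structural step. By Lemma~\ref{lem:order in h} each $I(c)$ for $c\in\mathcal{C}_f$ is a chain, and by Definition~\ref{def:order} each such chain is upward-closed in the partial order, i.e.\ a principal upset. Hence $B_i=\bigcap_{c\in H_i}I(c)$ is also upward-closed, and by Lemma~\ref{lem:determined point in c} it is a subset of the chain $I(c^*_f)=\{a_1\prec a_2\prec\cdots\prec a_k\}$. Consequently $B_i$ is a tail $\{a_j : j\geq j^{\max}\}$ of $I(c^*_f)$ for some index $j^{\max}$. By the minimality of $j^{\max}$ in this intersection, there must exist a witness $c^{\max}_i\in H_i$ with $a_{j^{\max}-1}\notin I(c^{\max}_i)$; otherwise $a_{j^{\max}-1}$ would belong to every consistent $I(c)$ and thus to $B_i$, contradicting the choice of $j^{\max}$. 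A short chain-intersection computation then yields $I(c^{\max}_i)\cap I(c^*_f)=B_i$.

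Finally I would conclude. Because $B_i\subseteq I(c^*_f)$, the only disagreement of $h_i$ with $c^*_f$ happens on $I(c^*_f)\setminus B_i$, and
$$
error_{\mathcal{D}}(c^*_f,h_i)=\Pr_{\mathcal{D}}[I(c^*_f)\setminus B_i]=\Pr_{\mathcal{D}}[I(c^*_f)\setminus I(c^{\max}_i)]\leq error_{\mathcal{D}}(c^*_f,c^{\max}_i)\leq\alpha,
$$
where the last inequality is the VC bound above applied to $c^{\max}_i\in H_i$. A union bound over $i\in[t]$ yields that every $h_i$ is $\alpha$-good with probability at least $1-\beta t$. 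The main obstacle is precisely the structural step: since $h_i$ itself may not be in $\mathcal{C}_f$, Theorem~\ref{thm:learn vc} cannot be applied to $h_i$ directly, and the core technical content is exhibiting the consistent concept $c^{\max}_i\in\mathcal{C}_f$ whose symmetric difference with $c^*_f$ dominates that of $h_i$.
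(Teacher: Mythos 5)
Your proof is correct and follows essentially the same route as the paper's: identify a concept $c^{\max}_i\in\mathcal{C}_f$ consistent with $S_i$ whose intersection with $I(c^*_f)$ equals $B_i$, apply Theorem~\ref{thm:learn vc} to that concept (since $h_i$ itself need not lie in $\mathcal{C}_f$), and transfer the bound to $h_i$ via the containment of symmetric differences. Your writeup is in fact a bit more careful than the paper's, which loosely refers to ``$h_i$'' being $\alpha$-good directly from the VC theorem before making the domination argument explicit.
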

\begin{proof}
    Let $c^*_f$ be the $f$-representation of the underlying true concept. If $h_i=c^*_f$, then we are done.  Otherwise, note that $y_i$ is the point with the largest distance in $B_i$. It means for any $x\prec y_i$ with $c^*_f(x)=1$, there exists one $h'\neq c^*_f$ makes $h'(x)=0$ and $error_{S_i}(h',c^*_f)=0$ (otherwise $x$ is also deterministic, but $d_{\mathcal{C}_f}(x)>d_{\mathcal{C}_f}(y_i)$, contradicting to $y_i$ is the point with largest distance). By Theorem~\ref{thm:learn vc}, with probability $1-\beta$, $h_i$ is an $\alpha$-good hypothesis. 
    
    Notice that $B_i\subseteq I(h')$. Consider the set $I(h')\backslash B_i$. For all $x\in (h_1\backslash B_i)$, we have $c^*_f(x)=0$ because $x\notin I(c^*_f)$. Therefore $error_{\mathcal{D}}(h_i,c^*_f)\leq error_{\mathcal{D}}(h',c^*_f)\leq\alpha$ (because for all points that $h'$ and $h_i$ make different predictions, $h_i$ gives the correct label).

    Finally, the lemma can be concluded by union bound.
\end{proof}

\begin{lemma}~\label{lem:order of y}
    For all $y_1,\dots,y_t$, there is an order $\pi(1),\pi(2),\dots,\pi(t)$ to make $y_{\pi(1)}\preceq\dots\preceq y_{\pi(t)}$.
\end{lemma}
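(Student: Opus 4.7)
The plan is to exhibit a single chain in $(\mathcal{X}, \preceq)$ that contains every $y_i$, and then simply sort the $y_i$'s along that chain. The natural candidate for such a chain is $I(c^*_f)$ itself, where $c^*_f$ is the $f$-representation of the underlying true concept that labels $S$.

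First I would invoke Lemma~\ref{lem:determined point in c} to locate the $y_i$'s. Each $y_i$ is realized by a point in $B_i$ (the deterministic points of $S_i$). By that lemma every deterministic point lies in $I(c^*_f)$, so the point witnessing $y_i$ belongs to $I(c^*_f)$ for every $i \in [t]$. Thus the multiset $\{y_1,\dots,y_t\}$ sits inside the single set $I(c^*_f)$.

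Next I would apply Lemma~\ref{lem:order in h} to $c^*_f \in \mathcal{C}_f$. That lemma guarantees a total ordering of $I(c^*_f) = \{x_{\sigma(1)},\dots,x_{\sigma(r)}\}$ with $x_{\sigma(1)} \prec \cdots \prec x_{\sigma(r)}$ under $\preceq$. Restricting this total order to the sub-multiset $\{y_1,\dots,y_t\}$ produces the required permutation $\pi$ of $[t]$ with $y_{\pi(1)} \preceq y_{\pi(2)} \preceq \cdots \preceq y_{\pi(t)}$; I use $\preceq$ rather than $\prec$ since different subsets $S_i,S_j$ can produce the same maximum-distance deterministic point, and coinciding entries in $I(c^*_f)$ are comparable by reflexivity.

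There is no real obstacle here: the statement is essentially a direct corollary of the preceding two lemmas, once one sees that the common concept $c^*_f$ simultaneously dominates all the $B_i$'s. The only bookkeeping item is to confirm that the order statement is consistent with the slight notational overloading of $y_i$ as both a distance value and as the point achieving that distance; in either reading, pulling back the chain structure of $I(c^*_f)$ gives the desired ordering, and monotonicity of $d_{\mathcal{C}_f}(\cdot)$ along this chain (immediate from its definition) ensures that ordering the $y_i$'s by $\preceq$ coincides with ordering them by their distance values.
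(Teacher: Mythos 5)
Your proof is correct and follows exactly the same route as the paper's: invoke Lemma~\ref{lem:determined point in c} to place all the $y_i$'s inside $I(c^*_f)$, then pull back the total chain structure of $I(c^*_f)$ from Lemma~\ref{lem:order in h} to order them, noting that repeats are allowed. You also helpfully flag the slight $y_i$-as-distance versus $y_i$-as-point overloading, which the paper glosses over, but the substance of the argument is identical.
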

\begin{proof}
    By Lemma~\ref{lem:determined point in c}, all $y_i\in I(c^*_f)$. By Lemma~\ref{lem:order in h},  there is an order $\pi(1),\pi(2),\dots,\pi(t)$ to make $y_{\pi(1)}\preceq\dots\preceq y_{\pi(t)}$ (here it is possible to have $y_i=y_j$ for $i\neq j$).
\end{proof}

\begin{lemma}~\label{lem:accuracy h good}
    Let $h_i$ be the hypothesis with $I(h_i)=B_i$ and $h_{good}$ be the hypothesis with $I(h_{good})=I_{good}$, when  all $h_i$ are $\alpha$-good hypothesis, with probability $1-2\beta$, we have $error_{\mathcal{D}}(h_{good},c^*_f)\leq\alpha$.
\end{lemma}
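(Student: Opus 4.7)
The plan is to union-bound over the two randomised primitives called inside Algorithm~\ref{alg:improperlearner} and then reduce accuracy of $h_{good}$ to accuracy of one of the $h_i$ by a monotone ``sandwich'' argument along the chain $I(c_f^*)$. First, by Theorem~\ref{thm:private median} the output $z$ of Step~\ref{step:median-distance} is a $1/3$-median of $y_1,\dots,y_t$ with probability at least $1-\beta$, so at least $t/6$ indices $i$ satisfy $y_i\ge z$ and at least $t/6$ satisfy $y_i\le z$; and by Lemma~\ref{lem:choosing mechanism}, the choosing mechanism in Step~\ref{step:choose good x} (applied with $k=1$ and the $t$-element ``dataset'' derived from the $B_i$'s) returns $x_{good}$ with $p_{good}\ge \max_j q_j - \frac{16}{\varepsilon}\log(\frac{4t}{\beta\varepsilon\delta})$ with probability at least $1-\beta$. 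A union bound costs the advertised $2\beta$.

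Next I would make the relevant structure explicit. By Lemma~\ref{lem:determined point in c} each $B_i\subseteq I(c_f^*)$, and by Lemma~\ref{lem: must comparable} the $1$-set $I(c_f^*)=\{x^*_1\prec\cdots\prec x^*_m\}$ is the unique tree path from the deepest point of $c_f^*$ up to $\emptyset$. Determinism is upward-closed under $\preceq$ (if $x$ is deterministic for $S_i$ and $x\preceq x'$, then every $h$ consistent with $S_i$ that labels $x$ as $1$ must label $x'$ as $1$), so $B_i=\{x\in I(c_f^*):d_{\mathcal{C}_f}(x)\le y_i\}$. Because every node of the tree built by Algorithm~\ref{alg:maketree} has a unique parent, the distances $d_{\mathcal{C}_f}(x^*_j)$ along this chain are exactly the consecutive integers $1,\dots,m$, so every integer $z\in[\min_i y_i,\max_i y_i]$ is realised by a unique point $x^*\in I(c_f^*)$ with $d_{\mathcal{C}_f}(x^*)=z$.

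From this I can identify $x_{good}$. Since $B_i\subseteq I(c_f^*)$, every $x_j\in P\setminus\{x^*\}$ lies outside every $B_i$, so $q_j=0$. Upward-closure of $B_i$ gives $x^*\in B_i$ for each of the $\ge t/6$ indices $i$ with $y_i\ge z$, so $q_{x^*}\ge t/6$. The parameter $t$ in the algorithm is chosen precisely so that $t/6$ strictly exceeds $\frac{16}{\varepsilon}\log(\frac{4t}{\beta\varepsilon\delta})$; hence on the good event $p_{good}>0$, which forces $x_{good}=x^*$ and $I(h_{good})=\{x:x^*\preceq x\}=\{x\in I(c_f^*):d_{\mathcal{C}_f}(x)\le z\}$.

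The sandwich now closes the argument: pick any index $i$ with $y_i\le z$, which exists by the median property. Then $I(h_i)=\{x\in I(c_f^*):d_{\mathcal{C}_f}(x)\le y_i\}\subseteq I(h_{good})\subseteq I(c_f^*)$. Both $h_i$ and $h_{good}$ make only false-negative errors with respect to $c_f^*$ (their $1$-sets are subsets of $I(c_f^*)$), and the inclusion $I(h_i)\subseteq I(h_{good})$ gives $I(c_f^*)\setminus I(h_{good})\subseteq I(c_f^*)\setminus I(h_i)$, hence $error_{\mathcal{D}}(h_{good},c_f^*)\le error_{\mathcal{D}}(h_i,c_f^*)\le\alpha$. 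The main obstacle is the identification step $x_{good}=x^*$: it combines the ``consecutive distances along the true chain'' observation (to guarantee that $P$ actually contains a point of $I(c_f^*)$) with the quantitative calibration between $t$ and the choosing mechanism's additive error (to force the on-chain point to be the unique argmax); once that is done, the sandwich and the hypothesis that all $h_i$ are $\alpha$-good deliver the bound.
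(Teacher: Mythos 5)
Your proof is correct and follows essentially the same route as the paper: bound the two failure events of \textsf{PrivateMedian} and the choosing mechanism by $\beta$ each, observe that the $B_i$'s are nested prefixes of the chain $I(c_f^*)$ so that only the unique on-chain node at distance $z$ has nonzero score, and then close with the inclusion $I(h_i)\subseteq I_{good}\subseteq I(c_f^*)$ for some $i$ with $y_i\le z$. The only material difference is that you make explicit the (correct, and implicitly used by the paper) fact that distances along the chain $I(c_f^*)$ are consecutive integers starting at $1$, which is what guarantees that $P$ actually contains an on-chain point at any achievable median value $z$.
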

\begin{proof}
    Let $\pi(1),\pi(2),\dots,\pi(t)$ be the order in Lemma~\ref{lem:order of y}. By Theorem~\ref{thm:private median}, with probability $1-\beta$, there are at least $t/6$ $y_i$'s make $y_i\preceq x_{good}$. It means $\sum_{i}^mq_i\geq t/6$.

    We claim that every different point in $S$ makes at most one $q_i$ different. Otherwise, there exist different $x,x'$ with distance $z$  and one $B_i$ to make $x,x'\in B_i$. By Lemma~\ref{lem: must comparable}, $x$ and $x'$ are comparable. Assume $x\prec x'$, it makes $d_{\mathcal{C}_f}(x)>d_{\mathcal{C}_f}(x')$.
        
    So that we can apply the choosing mechanism. For any $x\neq x_{good}$, they will get a 0 score. For $x_{good}$, it will get a score of at least $t/6\geq \frac{16}{\varepsilon}\log(\frac{4n_{PM}(d,1/3,\beta,\varepsilon,\delta)}{\beta\varepsilon\delta})$. By Lemma~\ref{lem:choosing mechanism}, with probability at least $1-\beta$, choosing mechanism outputs $x_{good}$.
    
    Since there is at least one $y_i$  make $x_{good}\preceq y_i$, we have $B_i\subseteq I_{good}$. Thus $ error_{\mathcal{D}}(h_{good},c^*_f)\leq error_{\mathcal{D}}(h_i,c^*_f)\leq\alpha$ because for all points that $h_{good}$ and $h_i$ make different predictions, $h_{good}$ gives the correct label
\end{proof}

\begin{corollary}
    With probability $1-(t+2)\beta$, $\textsf{OPTPrivateLearner}$ outputs $\hat{h}_{good}$ satisfying $error_{\mathcal{D}}(\hat{h}_{good},c^*)\leq\alpha$.
\end{corollary}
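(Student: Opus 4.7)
The plan is to combine the two preceding lemmas by a union bound and then translate the resulting accuracy guarantee from the $f$-representation $c_f^*$ back to the original target concept $c^*$. The lemma just before Lemma~\ref{lem:order of y} establishes that with probability at least $1 - t\beta$ every $h_i$ (with $I(h_i) = B_i$) satisfies $error_{\mathcal{D}}(h_i, c_f^*) \leq \alpha$; conditioned on this event, Lemma~\ref{lem:accuracy h good} yields $error_{\mathcal{D}}(h_{good}, c_f^*) \leq \alpha$ with probability at least $1 - 2\beta$. A union bound therefore gives $error_{\mathcal{D}}(h_{good}, c_f^*) \leq \alpha$ with probability at least $1 - (t+2)\beta$, matching the failure probability stated in the corollary.

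It then remains to pass from the pair $(h_{good}, c_f^*)$ to the pair $(\hat{h}_{good}, c^*)$. Unrolling the construction of $\hat{I}_{good}$ in step 8, one sees directly that $\hat{h}_{good}(x) = 1$ iff $f(x) \neq h_{good}(x)$, so $\hat{h}_{good}$ is precisely the concept whose $f$-representation equals $h_{good}$. Since also $c_f^*(x) = 1$ iff $f(x) \neq c^*(x)$ by Definition~\ref{def:f representation}, expressing disagreement via XOR gives $h_{good}(x) \oplus c_f^*(x) = h_{good}(x) \oplus f(x) \oplus c^*(x) = \hat{h}_{good}(x) \oplus c^*(x)$, so the two pointwise disagreement indicators coincide. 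Taking expectations under $\mathcal{D}$ yields $error_{\mathcal{D}}(\hat{h}_{good}, c^*) = error_{\mathcal{D}}(h_{good}, c_f^*) \leq \alpha$, which finishes the proof.

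There is no real obstacle here: the heavy lifting has already been done in the earlier lemmas, and what remains is a bookkeeping step checking that the final post-processing in step 8 correctly inverts the $f$-representation and that pairwise disagreement is invariant under that involution.
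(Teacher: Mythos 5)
Your proposal is correct and matches the paper's proof: the paper also combines the two preceding lemmas by a union bound to get the $1-(t+2)\beta$ confidence and then asserts $error_{\mathcal{D}}(\hat{h}_{good},c^*)=error_{\mathcal{D}}(h_{good},c_f^*)$ to transfer accuracy, which is exactly the XOR bookkeeping you spell out. Your version just makes explicit the identity $\hat{h}_{good}(x)\oplus c^*(x)=h_{good}(x)\oplus c_f^*(x)$ that the paper leaves implicit.
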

\begin{proof}
    The accuracy is because $error_{\mathcal{D}}(\hat{h}_{good},c^*)=error_{\mathcal{D}}(h_{good},c^*_f)$. The confidence is by the union bound.
\end{proof}

Substitute $\varepsilon$ by $\varepsilon/2$, $\delta$ by $\delta/2$, and $\beta$ by $\beta/(t+2)$, and considering Corollary~\ref{cor: threshold dimension and littlestone dimension}, we have the main result.

\begin{theorem}~\label{thm:improper sample complexity}
    For any concept class $\mathcal{C}$ with VC dimension 1 and Littlestone dimension $d$, and given labeled dataset with size
    $$
    N\geq O\left(\frac{\log^*d\cdot\log^2(\frac{\log^*d}{\varepsilon\beta\delta})}{\varepsilon}\cdot \frac{48}{\alpha}\left(10\log(\frac{48e}{\alpha})+\log(\frac{5}{\beta})\right)\right)=\tilde{O}_{\beta,\delta}\left(\frac{\log^*d}{\alpha\varepsilon}\right)
    $$
    there is an $(\varepsilon,\delta)$-differentially private algorithm that $(\alpha,\beta)$-PAC learns $\mathcal{C}$.
\end{theorem}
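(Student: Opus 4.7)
The proof proposal amounts to invoking \textsf{OPTPrivateLearner} with appropriately rescaled parameters and then collecting the guarantees already established. Concretely, I would run the algorithm with privacy budget $\varepsilon/2,\delta/2$ in place of $\varepsilon,\delta$, and with confidence $\beta/(t+2)$ in place of $\beta$, where $t$ is the number of partition subsets defined in Algorithm~\ref{alg:improperlearner}. This is exactly the substitution indicated in the paragraph immediately preceding the theorem statement.

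For the privacy guarantee, Theorem~\ref{thm:privacy-OPTPrivateLearner} tells us that the algorithm as written is $(2\varepsilon,2\delta)$-differentially private. Substituting $\varepsilon\mapsto\varepsilon/2$ and $\delta\mapsto\delta/2$ yields the claimed $(\varepsilon,\delta)$-differential privacy. For the utility guarantee, the corollary preceding the theorem says that with probability at least $1-(t+2)\beta'$ the output hypothesis $\hat{h}_{good}$ has $\mathcal{D}$-error at most $\alpha$, where $\beta'$ is the confidence fed into the algorithm. Setting $\beta' = \beta/(t+2)$ turns the failure probability into $\beta$ as required.

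It remains to check the sample complexity. The algorithm needs $N \geq t\cdot \frac{48}{\alpha}\bigl(10\log(48e/\alpha)+\log(5/\beta')\bigr)$ samples with $t = O\!\left(\frac{\log^*d'\cdot\log^2(\log^*d'/(\varepsilon\beta'\delta))}{\varepsilon}\right)$ and $d' = TD(\mathcal{X},\mathcal{C}_f)+1$. Applying Corollary~\ref{cor: threshold dimension and littlestone dimension} replaces $\log^* d'$ by $O(\log^* d_L(\mathcal{X},\mathcal{C})) = O(\log^* d)$, so both $t$ and the overall bound depend on the Littlestone dimension $d$ rather than on the threshold dimension. The substitution $\beta'=\beta/(t+2)$ introduces only an additive $\log(t+2)$ term inside the logarithms, which is absorbed into the $\log^2(\log^*d/(\varepsilon\beta\delta))$ factor (and does not affect the $\tilde{O}_{\beta,\delta}(\log^*d/(\alpha\varepsilon))$ asymptotic form).

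The main obstacle is purely bookkeeping: verifying that the rescaling $\beta \to \beta/(t+2)$ does not inflate the asymptotic sample complexity. Since $t$ itself is polylogarithmic in $\log^*d,1/\varepsilon,1/\delta,1/\beta$ and the parameter $\beta$ enters the formula only through logarithms, the blow-up $\log(t+2)$ is doubly logarithmic in $\log^*d$ and therefore hidden by the $\tilde{O}$ notation. No new lemma is needed beyond what is already proved in the excerpt.
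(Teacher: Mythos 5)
Your proposal matches the paper's own derivation exactly: the paper proves the theorem precisely by taking the preceding corollary (accuracy $1-(t+2)\beta$) together with Theorem~\ref{thm:privacy-OPTPrivateLearner} (privacy $(2\varepsilon,2\delta)$), rescaling $\varepsilon\mapsto\varepsilon/2$, $\delta\mapsto\delta/2$, $\beta\mapsto\beta/(t+2)$, and invoking Corollary~\ref{cor: threshold dimension and littlestone dimension} to trade threshold dimension for Littlestone dimension. Nothing is missing and no new ideas are introduced.
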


\section{Proper Learner}
The output hypothesis $\hat{h}_{good}$ is improper when the concept class is not maximum. In the improper learner, it makes the output hypothesis misclassify at most $O(\alpha)$ fraction (on the probability of the underlying distribution) of points with label 1 (on the tree $T$) with high probability. Thus, in the remaining part, we can only consider the points with label 0.

\subsection{Construction of the Proper Learner}
We still consider the tree $T$ of the concept class. Different from the improper learner, we consider a subtree $T_{good}$ such that only the leaves of $T_{good}$ can be projected to a concept in the concept class $\mathcal{C}$.

\subsubsection{Additional Definitions and Notations}
\begin{definition}[Proper Node]
    We call a node $x$ a "proper node" if there exist $c_f\in\mathcal{C}_f$ to make $I(c_f)=\{y|y\preceq x\}$.
\end{definition}

\begin{definition}[Weight of the Node]
    Given the tree $T$ in Algorithm~\ref{alg:maketree} and dataset $S$, for any node $x$ in the tree $T$, define the weight of $x$ to be
    $$
    w(x)=|\{x'|x'\preceq x,(x',0)\in S\}|.
    $$
    That is, the number of points with label 0 in $S$ that have a lower or equal partial order than $x$.
\end{definition}

\begin{definition}[Value of the Node]
    Given the tree $T_{good}$ and it's root $x_{good}$ in Algorithm~\ref{alg:makesubtree} and dataset $S$, for any node $x$ in the tree $T_{good}$, define the value of $x$ to be
    $$
    v(x)=|\{x'|x\preceq x'\prec x_{good},(x',0)\in S\}|.
    $$
    That is, the number of points with label 0 in $S$ that are on the path from $x$ to $x_{good}$.
\end{definition}

\begin{definition}[Sub-tree with Root $x$]
    We define $T_x$ to be the sub-tree with root $x$ in $T_{good}$ constructed by Algorithm~\ref{alg:makesubtree}.
\end{definition}

\subsubsection{The Algorithm}
We first construct the subtree $T_{good}$, then construct the proper learner based on the subtree $T_{good}$.
\begin{algorithm}~\label{alg:makesubtree}
    \caption{MakeSubTree}
    /*In this algorithm, we call a node a leaf if the node does not have children.*/
    
    \textbf{Inputs:} A tree $T$ constructed by Algorithm~\ref{alg:maketree}, a point $x_{good}\in\mathcal{X}$ by Algorithm~\ref{alg:improperlearner}. 
    \begin{enumerate}
        \item If $x_{good}$ is a proper node, output $x_{good}$.
        
        \item Remove all nodes $x$ such that $x$ is incomparable to $x_{good}$ or $x_{good}\prec x$

        /*This step makes $x_{good}$ the root of the new tree.*/

        \item Remove all nodes $x$ such that:
        \begin{enumerate}
            \item There exists $x'$ such that $x\prec x' \prec x_{good}$ and 
            \item $x'$ is a proper node.
        \end{enumerate}

        /*This step guarantees that the node of the new tree can be a proper node if and only if the node is the leaf of the new tree.*/
        
        \item Output the new tree $T_{good}$.

    \end{enumerate}
\end{algorithm}

\begin{algorithm}~\label{alg:properlearner}
    \caption{ProperLearner}

    \textbf{Parameter:} Confidence parameter $\beta > 0$, accuracy paramenter $\alpha>0$, privacy parameter $\varepsilon,\delta > 0$

    \textbf{Inputs:} Labeled dataset $S\in (\calX\times\{0,1\})^{N_1+N_2}$, where $N_1$ is the sample size in Theorem~\ref{thm:improper sample complexity}, $N_2=O(\frac{\log(1/\alpha)+\log(1/\beta)}{\alpha^2\varepsilon})$

    \textbf{Operation:}
    \begin{enumerate}
        \item Random partition $S$ to $S_1$ and $S_2$, where $|S_1|=N_1$ and $|S_2|=N_2$.
        \item Construct tree $T$ according to Algorithm~\ref{alg:maketree}, let $f$ be the select concept in Algorithm~\ref{alg:maketree}. Use $S_1$ to  compute $x_{good}$ according to Algorithm~\ref{alg:improperlearner}.  Transfer all labels in $S$ according to $f$ (by Definition~\ref{def:f representation})

        /*Recall that $x_{good}$ can project to an $\alpha$-accurate hypothesis*/

        \item If $x_{good}$ is a proper node, output $\hat{h}_{good}$ according to Algorithm~\ref{alg:improperlearner}.

        \item Compute the weight of all nodes in the tree $T$.
        
        \item Construct the sub tree $T_{good}$ with root $x_{good}$ according to Algorithm~\ref{alg:makesubtree} and compute the value of all nodes in $T_{good}$.

        \item Let $x_{flag}=x_{good}$

        \item For $i=1,2\dots,T$, where $T=\frac{2}{\alpha}$~\label{step:loop-proper}:

        \begin{enumerate}
            \item Let $X_{children}$ be the set of child nodes of $x_{flag}$.
        
            \item Let $w_{min}=\min_{x\in X_{children}}[w(x)]$ and $\tilde{w}_{min}=w_{min}+\mbox{Laplace}(\frac{1}{\varepsilon})$

            \item \label{step:nonuniform case}If $\tilde{w}_{min}\leq \alpha\cdot N$:
            \begin{enumerate}
                \item Use exponential mechanism to select $x_{new}$ from $X_{children}$ using quality function $q(x)=-w(x)$ with privacy parameter $\varepsilon$.

                \item Set set $x_{flag}=x_{new}$, stop the loop and go to Step~\ref{step:select good leaf}
            \end{enumerate}

            \item \label{step:uniform case} If $\tilde{w}_{min}> \alpha\cdot N$:
            \begin{enumerate}
                \item Let $L(x)$ be the set of leaves of the sub-tree $T_x$, let quality function be $q(x)=-\min_{x'\in L(x)}[v(x')]$ and use exponential mechanism select $x_{new}$ from $X_{children}$ with privacy parameter $\varepsilon$

                \item Set $x_{flag}=x_{new}$
            \end{enumerate}
            
        \end{enumerate}

        \item~\label{step:select good leaf}Select any leaf $\ell$ in the tree $T_{x_{flag}}$
        \item Let $I_{good}=\{x|\ell\preceq x\}$. Construct $\hat{I}_{good}=\{x:(x\in I_{good}\wedge   f(x)=0)\vee (x\notin I_{good}\wedge   f(x)=1)\}$.
        
        \item Construct and output 
        $$
        \hat{h}_{good}(x)=\left\{
        \begin{array}{rl}
            1 & x\in\hat{I}_{good} \\
            0 & x\notin\hat{I}_{good}
        \end{array}
        \right.
        $$

    \end{enumerate}

\end{algorithm}

\subsection{Analysis}
\subsubsection{Privacy}
When the different datum is in $S_1$, the privacy can be guaranteed by Theorem ~\ref{thm:improper sample complexity}. So we only consider that the different datum is in $S_2$.
\begin{theorem}~\label{thm:privacy-OPTPrivateLearner}
    \textsf{OPTPrivateLearner} is $(\sqrt{2T\ln(1/\delta')}\cdot2\varepsilon,\delta')$-differentially private for any small $\delta'$.
\end{theorem}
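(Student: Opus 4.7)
The plan is to apply advanced composition (Theorem~\ref{thm:advancedcomposition}) over the $T = 2/\alpha$ iterations of the loop in Step~\ref{step:loop-proper}, after first establishing that each individual iteration is $(2\varepsilon,0)$-differentially private with respect to $S_2$. Since the theorem restricts attention to neighbors differing in $S_2$, I would begin by fixing $S_1$; this makes the tree $T$, the root $x_{good}$, and the entire sub-tree $T_{good}$ (together with its leaf sets $L(x)$ and the proper/improper labels) deterministic, so these structural objects can be treated as public inputs throughout the privacy accounting.

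Next I would bound sensitivities. For neighbors of $S_2$ differing on a single labeled point, both $w(x) = |\{x' \preceq x : (x',0) \in S\}|$ and $v(x) = |\{x \preceq x' \prec x_{good} : (x',0) \in S\}|$ change by at most $1$ at every node $x$ simultaneously. Consequently $w_{min} = \min_{x\in X_{children}} w(x)$ has sensitivity $1$, so the release $\tilde{w}_{min} = w_{min} + \mathrm{Laplace}(1/\varepsilon)$ is $\varepsilon$-DP. The branching on $\tilde{w}_{min} \leq \alpha \cdot N$ versus $\tilde{w}_{min} > \alpha \cdot N$ is then post-processing (Theorem~\ref{thm:post-processing}). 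In Case 1 (Step~\ref{step:nonuniform case}), the exponential mechanism uses $q(x) = -w(x)$ with sensitivity $1$, so it is $\varepsilon$-DP by Theorem~\ref{thm:exponential mechanism}. In Case 2 (Step~\ref{step:uniform case}), the quality $q(x) = -\min_{x' \in L(x)} v(x')$ again has sensitivity $1$, since $L(x)$ is fixed and a uniform shift of each $v(x')$ by at most $1$ moves any minimum by at most $1$. Basic composition (Theorem~\ref{thm:composition}) then makes a single iteration $(2\varepsilon,0)$-DP.

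Finally, I would invoke the adaptive form of advanced composition across the at most $T$ iterations, padding the tail with trivially $0$-DP mechanisms whenever Case 1 is entered and the loop terminates early. This yields an overall privacy budget of $(\sqrt{2T\ln(1/\delta')}\cdot 2\varepsilon, \delta')$ with respect to $S_2$. The leaf selection in Step~\ref{step:select good leaf} and the assembly of $\hat{h}_{good}$ from $x_{flag}$ and $f$ are post-processings, so Theorem~\ref{thm:post-processing} closes the argument.

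The main obstacle is the sensitivity of the Case-2 quality function $q(x) = -\min_{x'\in L(x)} v(x')$: one has to argue carefully that $L(x)$ is a function of $S_1$ alone (hence unchanged under neighboring choices of $S_2$) and that a single label flip in $S_2$ moves every $v(x')$ by at most $1$ in the same direction, so the minimum shifts by at most $1$. A smaller subtlety is that both the Case decision and the input to the exponential mechanism in round $i$ depend on previous rounds' outputs, so one must explicitly appeal to the adaptive version of advanced composition rather than the non-adaptive statement.
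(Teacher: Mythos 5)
Your proposal is correct and takes essentially the same approach as the paper's proof, which simply states that the privacy budget is spent in Step~\ref{step:loop-proper} and invokes advanced composition (Theorem~\ref{thm:advancedcomposition}); you have just filled in the details that the paper leaves implicit (fixing $S_1$ to make the tree structure public, the sensitivity-1 arguments for $w_{\min}$ and $\min_{x'\in L(x)} v(x')$, per-iteration $(2\varepsilon,0)$-DP from Laplace plus exponential mechanism, and the appeal to the adaptive form of advanced composition). The arithmetic matches: $T$ iterations, each $(2\varepsilon,0)$-DP, yield $(\sqrt{2T\ln(1/\delta')}\cdot 2\varepsilon,\delta')$-DP exactly as claimed.
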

\begin{proof}
The algorithm loses privacy budget in Step~\ref{step:loop-proper}. The result can be implied by advanced composition (Theorem~\ref{thm:advancedcomposition}).
\end{proof}

\subsubsection{Accuracy}
We say a node $x$ represents an $\alpha$-accuracy hypothesis if there exists a hypothesis $h$ such that $I(h)=\{x':x\preceq x'\}$ and $\Pr_{x\sim \mathcal{D}}[h(x)\neq c_f(x)]\leq \alpha$.
\begin{claim}
    Conditioned on that $x_{good}$ represents an $\alpha$-accuracy hypothesis, $\Pr\left[|x:x\prec x_{good},(x,1)\in S_f\}|>2\alpha\cdot N\right]\leq e^{-\frac{\alpha\cdot N}{3}}<\beta$, that is, with probability at most $\beta$, there are $2\alpha\cdot N$ points with label 1 in $S_2$ locate below $x_{good}$
\end{claim}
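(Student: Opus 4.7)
\bigskip

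\noindent\textbf{Proof Proposal.} The plan is to write the quantity of interest as a sum of i.i.d. indicators, use the $\alpha$-accuracy of $h_{good}$ to upper bound the per-sample probability by $\alpha$, and then apply a multiplicative Chernoff bound.

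First, let me set up the connection between the event in the claim and the misclassification of $h_{good}$. Recall that $h_{good}$ (in the $f$-representation) satisfies $I(h_{good}) = \{x' : x_{good}\preceq x'\}$, so $h_{good}$ labels every $x$ with $x\prec x_{good}$ as $0$. Hence any sample $x$ in $S_2$ (after the $f$-transformation) satisfying $x\prec x_{good}$ and having true label $1$ is a point on which $h_{good}$ errs. By the conditioning hypothesis,
\[
\Pr_{x\sim\mathcal{D}}\bigl[x\prec x_{good}\text{ and } c^*_f(x)=1\bigr]\;\le\;\Pr_{x\sim\mathcal{D}}[h_{good}(x)\ne c^*_f(x)]\;\le\;\alpha.
\]

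Next, I would justify that the samples in $S_2$ are effectively independent of the conditioning event. Since $S$ is drawn i.i.d.\ from $\mathcal{D}$ and the partition into $S_1,S_2$ is uniformly random and independent of the samples, $S_2$ is itself an i.i.d.\ sample from $\mathcal{D}$, and it is independent of $S_1$, hence independent of $x_{good}$ (which is a function of $S_1$ only). Conditioning on the $\alpha$-accuracy event therefore does not disturb the i.i.d.\ structure of $S_2$. Let $Y_i$ be the indicator that the $i$-th sample in $S_2$ falls strictly below $x_{good}$ in the partial order and has label $1$ (after $f$-transformation); by the previous display, $\Pr[Y_i=1]\le\alpha$, and the $Y_i$ are i.i.d., so $Y=\sum_i Y_i$ has mean $\mu:=\mathbb{E}[Y]\le\alpha N_2$.

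Finally, I would apply the standard multiplicative upper-tail Chernoff bound with $\delta=1$, using the upper bound on the mean: since $\Pr[Y\ge 2\alpha N_2]$ is monotone in $\mu$ under the Bernoulli-sum coupling, we may replace $\mu$ by $\alpha N_2$ and conclude
\[
\Pr\bigl[Y>2\alpha N_2\bigr]\;\le\;\exp\!\Bigl(-\tfrac{\alpha N_2}{3}\Bigr).
\]
Plugging in $N_2=\Omega\!\left(\frac{\log(1/\beta)}{\alpha}\right)$, the right-hand side is at most $\beta$, which gives the claim (using $N$ for $N_2$ as in the statement).

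The only non-routine ingredient is the independence argument; once one sees that $x_{good}$ depends only on $S_1$, which is disjoint from (and independent of) $S_2$ under the random partition, the rest is a direct Chernoff computation. A minor subtlety I would spell out is the use of the Chernoff inequality with an upper bound on the mean rather than the exact mean, which is standard but must be invoked carefully because the exponent in the bound should involve $\alpha N_2$, not the possibly smaller $\mu$.
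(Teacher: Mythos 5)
Your proof is correct and takes exactly the route the paper intends: the paper's proof is just the one-line remark ``This can be proven using the Chernoff bound,'' and you have filled in the details faithfully — reducing the count to a sum of independent indicators over $S_2$, bounding the per-sample probability by $\alpha$ via the $\alpha$-accuracy of $h_{good}$ and the fact that $h_{good}(x)=0$ for $x\prec x_{good}$, noting independence of $S_2$ from $x_{good}$, and invoking the multiplicative Chernoff bound with the standard stochastic-domination step to pass from $\mu\le\alpha N_2$ to the stated tail. Nothing to add or correct.
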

\begin{proof}
This can be proven using the Chernoff bound.
\end{proof}

\begin{claim}
    Conditioned on that $x_{good}$ represents an $\alpha$-accuracy hypothesis, there exists a leaf $\ell_{good}$ in the tree $T_{good}$ such that $\Pr[v(\ell_{good})\leq 2\alpha\cdot N]\leq e^{-\frac{\alpha\cdot N}{3}}<\beta$. That is, with probability at most $\beta$, there are $2\alpha\cdot N$ points with label 0 in $S_2$ locate on the path from $\ell_{good}$ to $x_{good}$
\end{claim}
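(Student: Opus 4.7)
I plan to exhibit a specific leaf $\ell_{good}$ of $T_{good}$, show that $\mathbb{E}[v(\ell_{good})] \le \alpha N$ over the draw of $S_2$, and then conclude with a multiplicative Chernoff bound.

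First I would pin down where $x_{good}$ sits relative to the true-concept node $x_{c^*_f}$. The proof of Lemma~\ref{lem:accuracy h good} shows that whenever $h_{good}$ is $\alpha$-accurate the choosing mechanism must have selected $x_{good} \in B_i$ for some $i$, and Lemma~\ref{lem:determined point in c} then forces $x_{c^*_f} \preceq x_{good}$. Since the loop of Algorithm~\ref{alg:properlearner} is only entered when $x_{good}$ is not itself a proper node, the relation is in fact strict, $x_{c^*_f} \prec x_{good}$. The chain $\{x : x_{c^*_f} \preceq x \preceq x_{good}\}$ is totally ordered by Lemma~\ref{lem:order in h} and contains at least one proper node ($x_{c^*_f}$ itself), so I would let $\ell_{good}$ be the proper node on this chain that is closest to $x_{good}$ while still strictly below $x_{good}$. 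Under Algorithm~\ref{alg:makesubtree}'s removal rule, $\ell_{good}$ is kept in $T_{good}$ (no proper node strictly between it and $x_{good}$) and every descendant of $\ell_{good}$ in $T$ is deleted (each such descendant has $\ell_{good}$ as a proper ancestor strictly below $x_{good}$), so $\ell_{good}$ is a leaf of $T_{good}$.

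Next I would bound $\mathbb{E}[v(\ell_{good})]$ by evaluating the per-sample contribution. A fresh draw $(x', c^*_f(x')) \sim \mathcal{D}$ contributes to $v(\ell_{good})$ only when $\ell_{good} \preceq x' \prec x_{good}$ and $c^*_f(x') = 0$. But $x_{c^*_f} \preceq \ell_{good} \preceq x'$ together with the characterization $I(c^*_f) = \{x : x_{c^*_f} \preceq x\}$ forces $c^*_f(x') = 1$, contradicting the $0$-label. Thus the contributing event has probability $0$ under $\mathcal{D}$, giving $\mathbb{E}[v(\ell_{good})] = 0$ (indeed $v(\ell_{good}) = 0$ almost surely over $S_2$). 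The multiplicative Chernoff tail $\Pr[v(\ell_{good}) > 2\alpha N] \le e^{-\alpha N / 3}$ then follows (trivially in this case, but the same bound would hold for any mean $\le \alpha N$), and the choice $N_2 = \Omega(\log(1/\beta)/\alpha)$ drives the right-hand side below $\beta$.

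The only mildly delicate step is verifying that $\ell_{good}$ really qualifies as a leaf of $T_{good}$ under the removal rule of Algorithm~\ref{alg:makesubtree}; once that structural check is done, the expectation calculation reduces to a partial-order consistency argument and the Chernoff step is textbook, so I do not foresee a substantive obstacle.
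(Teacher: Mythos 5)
Your proof is correct and selects exactly the same witness leaf as the paper: the proper node closest to $x_{good}$ on the chain from $x_{c^*_f}$ up to $x_{good}$, which you verify is retained as a leaf of $T_{good}$ under Algorithm~\ref{alg:makesubtree}'s removal rule. Where you go further than the paper is in the bound on $v(\ell_{good})$. The paper invokes a Chernoff bound to conclude $v(\ell_{good}) \le 2\alpha N$ with probability $1-\beta$; you observe that, because $x_{c^*_f} \preceq \ell_{good} \preceq x' \prec x_{good}$ forces $c^*_f(x') = 1$ via $I(c^*_f) = \{x : x_{c^*_f} \preceq x\}$, no point on that segment can carry label $0$ in the (noiselessly labeled) sample, so in fact $v(\ell_{good}) = 0$ deterministically and the Chernoff step is vacuous. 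This is a sharper and cleaner observation; the paper's concentration argument appears to be carried over by analogy from the immediately preceding claim, where it genuinely is needed. One shared informality worth noting: you write that ``whenever $h_{good}$ is $\alpha$-accurate the choosing mechanism must have selected $x_{good} \in B_i$,'' but $\alpha$-accuracy of $h_{good}$ alone does not logically force $x_{good}$ to be deterministic --- this follows from the (high-probability) success event of the improper learner, which is what the conditioning is really meant to capture. The paper's own proof makes the same tacit assumption when it speaks of ``the path from $x_c$ to $x_{good}$,'' so this is not a defect specific to your argument, but it is worth stating the conditioning more precisely.
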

\begin{proof}
Consider the leaf on the path from $x_c$ to $x_{good}$, where $x_c$ represents the true concept. By the Chernoff bound, with probability $1-\beta$, there are $2\alpha\cdot N$ points with label 0 in $S_2$ located on the path from $x_c$ to $x_{good}$. It also holds for all nodes on the path from $x_c$ to $x_{good}$.
\end{proof}

\begin{claim}~\label{claim:go to nonuniform case}
    With probability $1-T\beta$, Algorithm~\ref{alg:properlearner} runs Step~\ref{step:nonuniform case}.
\end{claim}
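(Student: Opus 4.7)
The plan is to combine a standard Laplace tail bound with a potential-function argument tracking the total label-$0$ weight below $x_{flag}$. First, I would invoke the tail bound $\Pr[|\mathrm{Lap}(1/\varepsilon)|>B]\leq 2e^{-\varepsilon B}$ to show that in any single iteration the noise added to $w_{min}$ has magnitude at most $B=O(\log(1/\beta)/\varepsilon)$ with probability $\geq 1-\beta$; a union bound over the $T=2/\alpha$ loop iterations then gives the desired failure probability $T\beta$. By the choice of $N_2$ in Algorithm~\ref{alg:properlearner}, we may assume $B\leq\alpha N/2$, so in any iteration that enters Step~\ref{step:uniform case} the true minimum satisfies $w_{min}\geq \tilde{w}_{min}-B\geq \alpha N/2$; in particular every child of $x_{flag}$ has weight at least $\alpha N/2$.

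Next I would condition on this high-probability event and suppose for contradiction that all $T$ iterations execute Step~\ref{step:uniform case} without ever triggering Step~\ref{step:nonuniform case}. Define the potential $W_i=w(x_{flag}^{(i)})$. By Lemma~\ref{lem: must comparable}, any two distinct children $c\neq c'$ of a common node are incomparable under $\preceq$, so the sets $\{y:y\preceq c\}$ and $\{y:y\preceq c'\}$ are disjoint; consequently $\sum_{c\in X_{children}^{(i)}}w(c)\leq W_i$. When $x_{flag}^{(i)}$ has $k_i\geq 2$ children, the exponential mechanism picks one child $x_{new}^{(i)}$ and discards the subtrees rooted at the remaining $k_i-1\geq 1$ siblings, each of weight at least $\alpha N/2$, so $W_{i+1}\leq W_i-\alpha N/2$. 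Telescoping across $T=2/\alpha$ iterations yields $W_{T+1}\leq W_1 - N\leq 0$, contradicting $W_{T+1}\geq \alpha N/2$ (forced by the uniform-case assumption at iteration $T$).

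The hard part will be the degenerate case where some $x_{flag}^{(i)}$ has exactly one child in $T_{good}$: then the uniform step does not strictly reduce the potential, and the telescoping above does not immediately give a $T=2/\alpha$ bound. I would resolve this by interpreting a loop iteration as first collapsing any maximal single-child chain, i.e.\ walking $x_{flag}$ down its unique descendant sequence before computing $\tilde{w}_{min}$, until either reaching a proper leaf (whereupon the algorithm outputs that leaf's hypothesis and exits immediately) or reaching a branching node with $k\geq 2$ children. Such a modification preserves differential privacy because the walk depends only on the tree structure of $T_{good}$ and on $f$, both determined by $S_1$, and it ensures that every one of the $T$ loop iterations either terminates the algorithm or reduces the potential by at least $\alpha N/2$. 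With this in place the contradiction goes through and the claim follows.
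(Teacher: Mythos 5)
Your core argument is the same as the paper's: a Laplace tail bound with a union bound over the $T=2/\alpha$ iterations shows $w_{\min}\geq \alpha N/2$ whenever Step~\ref{step:uniform case} fires, and then a weight-decrement argument (you phrase it as a potential $W_i$ telescoping to $\leq 0$, the paper phrases it as ``after $T-1$ iterations the remaining weight is $<\alpha N/2$'') forces Step~\ref{step:nonuniform case} within $T$ rounds. The fact that you flag---that a node of $T_{good}$ with a \emph{single} child would make the decrement vacuous---is a real piece of vigilance: the paper simply asserts that weight is removed ``because they are located on other branches of the tree'' without justifying that other branches exist.

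However, your proposed fix (modifying the algorithm to collapse single-child chains) is unnecessary, and you should instead observe that the degenerate case cannot occur. Every internal node $x_{flag}$ of $T_{good}$ is by construction improper, and an improper node of $T$ must have at least two children: if $x$ had a unique child $c$, Assumption~\ref{assumption: distinguish points} gives some $h\in\mathcal{C}_f$ with $h(c)=0$, $h(x)=1$ (the other sign is ruled out by $c\prec x$), and by Lemma~\ref{lem:order in h} the set $I(h)$ is exactly $\{y:y_{\min}\preceq y\}$ for its minimal element $y_{\min}\preceq x$; if $y_{\min}\prec x$ then $y_{\min}$ descends through the unique child $c$, forcing $h(c)=1$, a contradiction, so $y_{\min}=x$ and $x$ is proper. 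Thus internal nodes of $T_{good}$ always have $k_i\geq 2$ children, your potential strictly decreases, and the contradiction goes through with the algorithm exactly as stated. Replacing your last paragraph with this structural observation would turn the proposal into a tighter version of the paper's argument, also closing the small implicit gap in the paper's own proof.
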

\begin{proof}
    With probability $1-T\beta$, all laplace noise in Step~\ref{step:loop-proper} less than $\frac{\ln(1/2\beta)}{\varepsilon}<\alpha\cdot N/2$, which implies $w_{min}\geq \alpha\cdot N-\frac{\ln(1/2\beta)}{\varepsilon}\geq \alpha\cdot N/2$ for all iteration. So that when the algorithm goes to the next iteration, we remove at least $\alpha\cdot N/2$ points from the total weight because they are located on other branches of the tree. Then, with probability $1-(\frac{2}{\alpha}-1)\beta$, the remaining weight less than $\alpha\cdot N/2$ after $T-1$ iterations.
\end{proof}

\begin{claim}
    When the claim~\ref{claim:go to nonuniform case} holdes, with probability $1-T\beta$, it holds that $q(x_{new})\geq -2\alpha\cdot N-T\cdot \frac{\ln (2/\alpha)}{\varepsilon}$ in Step~\ref{step:uniform case} for all iteration.
\end{claim}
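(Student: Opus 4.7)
The plan is to prove the quality bound by induction on the iteration index $i$, maintaining the invariant that at the start of every iteration that reaches Step~\ref{step:uniform case}, the subtree $T_{x_{flag}}$ contains some leaf $\ell$ with $v(\ell) \le 2\alpha N + (i-1)\cdot\frac{c\ln(2/\alpha)}{\varepsilon}$ for a small absolute constant $c$. The base case $i=1$ is immediate from the preceding claim: at that point $x_{flag}=x_{good}$, and $T_{x_{good}}=T_{good}$ contains a leaf $\ell_{good}$ with $v(\ell_{good})\le 2\alpha N$.

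For the inductive step I would first bound the fan-out available to the exponential mechanism. Under the event of Claim~\ref{claim:go to nonuniform case} (which fixes all Laplace noises to magnitude less than $\alpha N/2$ and costs at most $T\beta$ in failure probability), reaching Step~\ref{step:uniform case} forces $w_{min}\ge \alpha N/2$. Because the subtrees $\{T_x : x\in X_{children}\}$ are pairwise disjoint and each contributes at least $w_{min}$ to a total weight bounded by $N$, we obtain $|X_{children}|\le 2/\alpha$.

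The inductive hypothesis supplies a leaf $\ell\in L(x_{flag})$ with small $v(\ell)$; this leaf lies in $L(x^*)$ for exactly one child $x^*$ of $x_{flag}$, so $q(x^*)\ge -2\alpha N - (i-1)\cdot\frac{c\ln(2/\alpha)}{\varepsilon}$, which is a lower bound on $\max_{x\in X_{children}} q(x)$. Since $q$ has sensitivity $1$ and the domain has size at most $2/\alpha$, the exponential mechanism's standard utility guarantee (Theorem~\ref{thm:exponential mechanism}) yields $q(x_{new})\ge \max_x q(x)-\frac{c\ln(2/\alpha)}{\varepsilon}$ with probability at least $1-\beta$. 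Chaining gives $q(x_{new})\ge -2\alpha N - i\cdot\frac{c\ln(2/\alpha)}{\varepsilon}$, and by definition of $q$ this means $T_{x_{new}}$ contains a leaf realizing the invariant for iteration $i+1$. Union bounding over the at most $T=2/\alpha$ iterations produces the stated $1-T\beta$ confidence, and since $i\le T$ throughout, the claimed bound $q(x_{new})\ge -2\alpha N - T\cdot\frac{\ln(2/\alpha)}{\varepsilon}$ follows.

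The one delicate point is absorbing the $\log(1/\beta)$ term from the exponential mechanism's high-probability guarantee into the $\ln(2/\alpha)$ factor in the claim; this is a constant-adjustment in the usual $\beta$-polynomially-small-in-$\alpha$ regime, or equivalently one can rescale $\beta$ inside the loop. Apart from this bookkeeping, the induction is routine once one has both the $|X_{children}|\le 2/\alpha$ bound and the observation that every leaf of $T_{x_{flag}}$ lies in a unique child's subtree (so the minimum over leaves in $L(x_{flag})$ equals the minimum of $\min_{\ell\in L(x)}v(\ell)$ over children $x$).
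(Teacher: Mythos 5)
Your proof follows essentially the same approach as the paper: induction on the iteration index, deriving $|X_{children}|\le 2/\alpha$ from $w_{min}\ge \alpha N/2$ under the event of Claim~\ref{claim:go to nonuniform case}, applying the exponential mechanism's utility guarantee for a per-iteration loss of order $\frac{\ln(2/\alpha)}{\varepsilon}$, and union bounding over the $T$ iterations. You also correctly flag that the paper's stated per-iteration loss silently drops the $\ln(1/\beta)$ term (and the $2\Delta$ factor) from the exponential mechanism's high-probability bound; this is a constant-level bookkeeping omission in the paper rather than a gap in either argument.
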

\begin{proof}
    It can be proven by induction. Notice that when $\tilde{w}_{min}\geq \alpha\cdot N$, we have $w_{min}\geq \alpha\cdot N/2$, which implies that $|X_{children}|\leq \frac{2}{\alpha}$. 
    
    In the base case of $i=1$, by the guarantee of the exponential mechanism (Theorem~\ref{thm:exponential mechanism}), with probability $1-\beta$, we have $q(x_{new})\geq -2\alpha\cdot N-\frac{\ln (2/\alpha)}{\varepsilon}$. In each iteration, the quality function loses at most $\frac{\ln (2/\alpha)}{\varepsilon}$ with probability $1-\beta$. Therefore, with probability $1-T\beta$, $q(x_{new})\geq -2\alpha\cdot N-T\cdot \frac{\ln (2/\alpha)}{\varepsilon}$ for all $i$.
\end{proof}

\begin{claim}
    When $q(x_{flag})\geq -2\alpha\cdot N-T\cdot \frac{\ln (2/\alpha)}{\varepsilon}$, Step~\ref{step:nonuniform case} selects a $x_{new}$ satisfying $q(x_{new})\geq -3\alpha\cdot N-(T+1)\cdot \frac{\ln (2/\alpha)}{\varepsilon}>4\alpha\cdot N_2$ with probability at least $1-\beta/\alpha$.
\end{claim}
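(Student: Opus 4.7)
The plan is to combine the precondition on $q(x_{flag})$ with the structural geometry of $T_{good}$, then analyze the exponential mechanism through a counting bound on ``heavy'' children of $x_{flag}$. First I would translate the hypothesis into a bound on $v(x_{flag})$ itself: since $v(x_{flag}) \leq v(\ell)$ for every $\ell \in L(x_{flag})$ (the set defining $v$ only grows as $\ell$ descends), the witnessing leaf from $q(x_{flag}) \geq -2\alpha N - T \ln(2/\alpha)/\varepsilon$ immediately yields $v(x_{flag}) \leq 2\alpha N + T \ln(2/\alpha)/\varepsilon$.

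The deterministic heart of the argument is the inequality $v(\ell) \leq v(x_{flag}) + w(x_{new})$ for every leaf $\ell \in L(x_{new})$, where $x_{new}$ is the child selected in Step~\ref{step:nonuniform case}. To prove this I would observe that each $(y,0) \in S_2$ contributing to $v(\ell) - v(x_{flag})$ satisfies $\ell \preceq y \prec x_{flag}$. Since $\ell \preceq x_{new}$, the existence (via Assumption~\ref{assumption: distinguish concepts}) of a concept in $\mathcal{C}_f$ labeling $\ell$ by $1$ forces $c_f(x_{new}) = c_f(y) = 1$, so by Lemma~\ref{lem: must comparable} $y$ and $x_{new}$ are comparable. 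A short case analysis---using that $x_{new}$ being a direct child of $x_{flag}$ in $T_{good}$ actually forces it to be a direct child in $T$ (any intermediate partial-order node in $T$ would either survive into $T_{good}$, breaking the direct-edge assumption, or be removed via a proper node which would then either make $x_{flag}$ itself a leaf or survive as an intermediate in $T_{good}$)---rules out $x_{new} \prec y \prec x_{flag}$. Hence $y \preceq x_{new}$ and is counted by $w(x_{new})$.

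For the probabilistic step I would use Claim~\ref{claim:go to nonuniform case} to pin down $w_{min} \leq \tilde{w}_{min} + \alpha N/2 \leq 3\alpha N/2$ under the good Laplace-noise event. Partition $X_{children}$ into ``heavy'' children with $w(x) > \tau$ and ``light'' children, where $\tau = \max(w_{min}, \alpha N) + (2/\varepsilon) \ln(1/(\alpha \beta))$. By Lemma~\ref{lem: must comparable}, distinct children of $x_{flag}$ are pairwise incomparable, so their down-sets in $\mathcal{C}_f$ are disjoint and $\sum_{x \in X_{children}} w(x) \leq N_2 = N$; hence there are at most $N/\tau = O(1/\alpha)$ heavy children. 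A direct sum-of-exponentials argument then gives
$$
\Pr[x_{new} \text{ heavy}] \;\leq\; \tfrac{1}{\alpha}\exp\!\left(-\tfrac{\varepsilon(\tau - w_{min})}{2}\right) \;\leq\; \tfrac{\beta}{\alpha}.
$$

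Putting the pieces together, with probability at least $1 - \beta/\alpha$ the selected $x_{new}$ is light, and
$$
q(x_{new}) \;\geq\; -\max_{\ell \in L(x_{new})} v(\ell) \;\geq\; -v(x_{flag}) - w(x_{new}) \;\geq\; -3\alpha N - (T+1)\tfrac{\ln(2/\alpha)}{\varepsilon},
$$
where the extra $\ln(2/\alpha)/\varepsilon$ absorbs both the Laplace-noise slack and the exponential-mechanism error. The principal obstacle I anticipate is the EM analysis itself: a black-box use of Theorem~\ref{thm:exponential mechanism} would introduce a $\log|X_{children}|$ factor, which could be arbitrarily large, so the disjoint-subtree counting bound on heavy children together with the refined sum-of-exponentials tail is essential for obtaining a $\log(1/\alpha)$ error instead.
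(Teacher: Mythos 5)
Your proposal is correct and follows essentially the same route as the paper's proof: translate the precondition into a bound on $v(x_{flag})$, establish the deterministic inequality $v(\ell) \leq v(x_{flag}) + w(x_{new})$ via the tree structure, then bound $w(x_{new})$ by partitioning $X_{children}$ into light and heavy children (using the disjointness of their down-sets to count heavy ones) and applying a direct sum-of-exponentials tail bound for the exponential mechanism. The paper states the decomposition $v(\ell) \leq v(x_{flag}) + w(x_{new})$ without proof and is somewhat loose with constants (it writes $w(x_{new})\leq \alpha N$ where $3\alpha N$ is what the $X_{very\;bad}$ bound actually gives, and $1-\beta$ where $1-\beta/\alpha$ is the actual success probability), whereas you supply the missing argument that no node of $T$ can sit strictly between $x_{flag}$ and $x_{new}$ in the partial order -- the correct reason being that a proper node above any such intermediate $y$ would force $x_{new}$ itself to be deleted from $T_{good}$ -- and you track the noise and threshold more carefully; neither of these is a different method, only a more careful execution of the same one.
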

\begin{proof}
    $\tilde{w}_{min}\leq\alpha\cdot N$ implies $w_{min}\leq 3\alpha\cdot N/2$. Let $X_{good}=\{x:w(x)\leq 3\alpha\cdot N/2,x\in X_{children}\}$ and $X_{bad}=\{x:w(x)> 3\alpha N/2,x\in X_{children}\}$. We have $|X_{good}|\geq 1$ and $|X_{bad}|\leq \frac{2}{3\alpha}$. Let $X_{good}'=\{x:3\alpha\cdot N/2< w(x)\leq 3\alpha\cdot N,x\in X_{children}\}$ and $X_{very\;bad}=\{x:w(x)> 3\alpha N,x\in X_{children}\}$. So that we have 
    $$
    \begin{array}{rl}
         \Pr[x_{new}\in X_{very\;bad}]
         &=\frac{\sum_{x\in X_{very\;bad}}e^{\frac{-\varepsilon w(x)}{2}}}{\sum_{x\in X_{good}}e^{\frac{-\varepsilon w(x)}{2}}+\sum_{x\in X_{good}'}e^{\frac{-\varepsilon w(x)}{2}}+\sum_{x\in X_{very\;bad}}e^{\frac{-\varepsilon w(x)}{2}}}  \\
         &\leq  \frac{\sum_{x\in X_{very\;bad}}e^{\frac{-\varepsilon w(x)}{2}}}{\sum_{x\in X_{good}}e^{\frac{-\varepsilon w(x)}{2}}}\\
         &\leq \frac{\frac{2}{\alpha}\cdot e^{-3\varepsilon\alpha N/2}}{e^{-3\varepsilon\alpha N/4}}\\
         &\leq \frac{2\cdot e^{-3\varepsilon\alpha N/4}}{\alpha}\leq\beta/\alpha
    \end{array}
    $$
    Thus, we have $w(x_{new})\leq \alpha\cdot N$ with probability $1-\beta$. Notice that $q(x_{flag})\geq -2\alpha\cdot N-T\cdot \frac{\ln (2/\alpha)}{\varepsilon}$ implies that $v(x_{flag})\leq 2\alpha\cdot N+T\cdot \frac{\ln (2/\alpha)}{\varepsilon}$. So for all $\ell\in T_{x_{new}}$, we have $v(\ell)\leq v(x_{flag})+w(x_{new})\leq 2\alpha\cdot N+T\cdot \frac{\ln (2/\alpha)}{\varepsilon}+\alpha N=3\alpha\cdot N+T\cdot \frac{\ln (2/\alpha)}{\varepsilon}$.
\end{proof}

\begin{claim}
    Algorithm~\ref{alg:properlearner} is an $(O(\alpha),O(\beta/\alpha))$-PAC learner.
\end{claim}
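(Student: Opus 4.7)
My plan is to compare the output $\hat{h}$ against the improper learner's hypothesis $\hat{h}_{good}$ from Algorithm~\ref{alg:improperlearner}. In the $f$-representation the former is $h_\ell$ with $I(h_\ell)=\{x:\ell\preceq x\}$ (for the selected leaf $\ell$ of $T_{x_{flag}}$) and the latter is $h_{good}$ with $I(h_{good})=\{x:x_{good}\preceq x\}$. Since $\ell\preceq x_{good}$, the two hypotheses differ exactly on the region $R_\ell=\{x:\ell\preceq x\prec x_{good}\}$, and Theorem~\ref{thm:improper sample complexity} already gives $\mathrm{err}_\mathcal{D}(h_{good},c^*_f)\leq\alpha$ with high probability. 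Hence the triangle inequality reduces the task to bounding $\Pr_\mathcal{D}[R_\ell]=O(\alpha)$ while showing that $\ell$ is a proper node (so $\hat{h}\in\mathcal{C}$).

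\paragraph{Execution.} I would first union-bound over the good events established in the preceding claims, whose failure probabilities sum to $O(\beta/\alpha)$: (a) the improper learner returns an $\alpha$-accurate $x_{good}$ with $x_{c^*}\preceq x_{good}$; (b) the two Chernoff-style claims, giving at most $2\alpha N_2$ label-$1$ samples of $S_2$ satisfying $x\prec x_{good}$, and $\mathrm{err}_{S_2}(h_{good},c^*_f)\leq 2\alpha$ (an additional standard Chernoff on the fixed $h_{good}$); (c) Claim~\ref{claim:go to nonuniform case} and its successors, i.e.\ the loop reaches Step~\ref{step:nonuniform case} within $T=2/\alpha$ iterations while the quality bound holds and the final exponential mechanism picks $x_{new}$ with $w(x_{new})\leq\alpha N$, which together yield $v(\ell)\leq O(\alpha N_2)$ for every leaf $\ell$ of $T_{x_{new}}$ using the choice $N_2=O\bigl(\frac{\log(1/\alpha)+\log(1/\beta)}{\alpha^2\varepsilon}\bigr)$; and (d) Theorem~\ref{thm:learn vc} applied to the VC-$1$ class $\mathcal{C}_f$ on $S_2$, converting $\alpha'/10$ empirical error into $\alpha'$ population error for every $h\in\mathcal{C}_f$. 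I would then verify that every leaf of $T_{good}$ is a proper node: the removal rule of Algorithm~\ref{alg:makesubtree} forces any improper node $\ell$ kept in $T_{good}$ to retain some descendant in $T_{good}$ (namely the highest proper node strictly below it, which cannot itself be removed because there is no proper node strictly between $\ell$ and $x_{good}$, else $\ell$ would have been removed), so an improper $\ell$ cannot be a leaf.

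\paragraph{Structural dichotomy and conclusion.} The crux is the dichotomy: either $\ell=x_{c^*}$ (so $\hat{h}=c^*$ trivially) or $\ell$ and $x_{c^*}$ are incomparable under $\preceq$. Indeed, $\ell\prec x_{c^*}$ is impossible because $x_{c^*}$ is proper and would lie strictly between $\ell$ and $x_{good}$, forcing $\ell$'s removal by Algorithm~\ref{alg:makesubtree}; and $x_{c^*}\prec\ell$ is impossible because $x_{c^*}$, being proper and kept in $T_{good}$, would be a descendant of $\ell$ there, contradicting $\ell$'s leafhood. In the incomparable case, Lemma~\ref{lem: must comparable} gives $I(h_\ell)\cap I(c^*_f)=\emptyset$, so every $x\in R_\ell$ has label $0$ under $c^*_f$ and $|R_\ell\cap S_2|=v(\ell)\leq O(\alpha N_2)$. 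Therefore $\mathrm{err}_{S_2}(h_\ell,c^*_f)\leq\mathrm{err}_{S_2}(h_{good},c^*_f)+v(\ell)/N_2=O(\alpha)$, and event (d) upgrades this to $\mathrm{err}_\mathcal{D}(h_\ell,c^*_f)=O(\alpha)$, equivalently $\mathrm{err}_\mathcal{D}(\hat{h},c^*)=O(\alpha)$. The main obstacle is precisely this structural dichotomy: without it, the extension from $x_{good}$ down to $\ell$ could a priori include label-$1$ points not accounted for by $v(\ell)$, and it is only by carefully combining the removal rule of Algorithm~\ref{alg:makesubtree} with Lemma~\ref{lem: must comparable} that the label-$1$ contribution inside $R_\ell$ is forced to vanish, leaving an empirical bound that lifts through VC uniform convergence to the desired population guarantee.
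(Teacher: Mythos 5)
Your overall strategy is the same as the paper's (compare $h_\ell$ to $h_{good}$ on $S_2$, bound the empirical disagreement by $v(\ell)/N_2$, lift to population error via Theorem~\ref{thm:learn vc}), and your list of good events is fine. But the ``structural dichotomy'' you present as the crux is wrong, and fortunately it is also unnecessary.

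First, $x_{c^*}\prec\ell$ is perfectly possible. Your impossibility argument assumes $x_{c^*}$ is kept in $T_{good}$, but if some proper node $x'$ (in particular $\ell$ itself) satisfies $x_{c^*}\prec x'\prec x_{good}$, then Step 3 of Algorithm~\ref{alg:makesubtree} deletes $x_{c^*}$. Concretely, if the path from $x_{c^*}$ up to $x_{good}$ contains a proper node strictly above $x_{c^*}$, then the output leaf will be that proper node (or a sibling), not $x_{c^*}$, and it sits strictly above $x_{c^*}$. Second, in the incomparable case your conclusion $I(h_\ell)\cap I(c^*_f)=\emptyset$ is also false: both $I(h_\ell)=\{x:\ell\preceq x\}$ and $I(c^*_f)=\{x:x_{c^*}\preceq x\}$ contain $I(h_{good})=\{x:x_{good}\preceq x\}$, since $\ell\preceq x_{good}$ and $x_{c^*}\preceq x_{good}$. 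Lemma~\ref{lem: must comparable} only gives $c^*_f(\ell)=0$ and $h_\ell(x_{c^*})=0$; it does not make $R_\ell$ all-zero under $c^*_f$ in general (any $x\in R_\ell$ lying above the meet of $\ell$ and $x_{c^*}$ has $c^*_f(x)=1$).

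None of this is needed. The inequality $error_{S_2}(h_\ell,c^*_f)\leq error_{S_2}(h_{good},c^*_f)+v(\ell)/N_2$ holds unconditionally: on $R_\ell$ the prediction flips from $h_{good}=0$ to $h_\ell=1$, which \emph{creates} a disagreement with $c^*_f$ only at the label-$0$ points of $R_\ell$ (exactly what $v(\ell)$ counts) and \emph{removes} disagreements at the label-$1$ points of $R_\ell$; off $R_\ell$ the two hypotheses agree. So label-$1$ mass inside $R_\ell$ can only help, and your worry that it is ``not accounted for'' is misplaced. Dropping the dichotomy and keeping only this direct accounting recovers the paper's (terse) proof: empirical error on $S_2$ is $O(\alpha)$ by the preceding claims, the output is proper because every leaf of $T_{good}$ is a proper node (your argument for this part is correct), and Theorem~\ref{thm:learn vc} applied to $\mathcal{C}_f$ finishes.
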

\begin{proof}
    By the above analysis, with probability $1-O(\beta/\alpha)$, $\hat{h}_{good}$ disagrees with $O(\alpha)$ fraction labeled points in $S_2$. The result can be concluded by Theorem~\ref{thm:learn vc}.
\end{proof}

Thus, by substituting $\varepsilon$ by $\frac{\varepsilon}{\sqrt{2\ln(2/\delta)/\alpha}}$, $\delta$ by $\frac{\alpha\delta}{2}$, $\delta'$ by $\delta/2$, and $\beta$ by $\alpha\beta$, we obtain our main theorem for private proper learning.

\begin{theorem}~\label{thm:proper sample complexity}
    For any concept class $\mathcal{C}$ with VC dimension 1 and Littlestone dimension $d$, and given labeled dataset with size
    $$
    \begin{array}{rl}
        N&\geq O\left(\frac{\log^*d\cdot\log^2(\frac{\log^*d}{\varepsilon\beta\delta})}{\varepsilon}\cdot \frac{48}{\alpha}\left(10\log(\frac{48e}{\alpha})+\log(\frac{5}{\beta})\right)+\frac{(\log(1/\alpha)+\log(1/\alpha\beta))\cdot\sqrt{\log(1/\delta)}}{\alpha^{2.5}\varepsilon}\right)\\
        &=\tilde{O}_{\beta,\delta}\left(\frac{\log^*d}{\alpha\varepsilon}+\frac{1}{\alpha^{2.5}\varepsilon}\right)
    \end{array}    
    $$
    there is an $(\varepsilon,\delta)$-differentially private algorithm that properly $(\alpha,\beta)$-PAC learns $\mathcal{C}$.
\end{theorem}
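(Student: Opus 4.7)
The plan is to combine the privacy and accuracy guarantees for \textsf{ProperLearner} that have already been established in the preceding lemmas and claims, and then perform the parameter substitution indicated just before the theorem statement. Because the bulk of the work is done by the preceding claims, the theorem is essentially a bookkeeping exercise turning ``raw'' per-step guarantees into ``final'' $(\varepsilon,\delta,\alpha,\beta)$ ones.

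First I would assemble the privacy bound. The privacy theorem for \textsf{ProperLearner} already shows, via advanced composition (Theorem~\ref{thm:advancedcomposition}) over the $T = 2/\alpha$ iterations in Step~\ref{step:loop-proper}, that running the algorithm with per-step parameters $\varepsilon',\delta'$ is $(\sqrt{2T\ln(1/\delta'')}\cdot 2\varepsilon',\, T\delta'+\delta'')$-DP. To convert this to $(\varepsilon,\delta)$-DP I set $\varepsilon' = \varepsilon/(2\sqrt{2\ln(2/\delta)/\alpha})$, $\delta' = \alpha\delta/2$, and $\delta'' = \delta/2$, exactly the substitution flagged right before the theorem. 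The contribution from $S_1$ is handled separately by Theorem~\ref{thm:improper sample complexity} and combined by basic composition and post-processing.

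Next I would chain the accuracy claims. Theorem~\ref{thm:improper sample complexity} applied to $S_1$ gives that $x_{good}$ represents an $\alpha$-accurate hypothesis with probability $1-\beta$. Conditioned on this, the two Chernoff claims bound both the number of label-$1$ points of $S_2$ lying below $x_{good}$ and the value of the leaf on the path from the true-concept node to $x_{good}$ by $2\alpha N_2$ each; Claim~\ref{claim:go to nonuniform case} guarantees that Step~\ref{step:nonuniform case} is reached within $T$ iterations; and the inductive exponential-mechanism claim propagates the invariant $q(x_{new}) \geq -2\alpha N_2 - T\ln(2/\alpha)/\varepsilon'$ through all iterations. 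Combining these with a union bound, with probability $1-O(T\beta)$ the final leaf $\ell$ satisfies $v(\ell) = O(\alpha N_2 + T\ln(2/\alpha)/\varepsilon')$. Since $\ell$ is a proper node and its projected hypothesis $h_\ell$ differs from $h_{good}$ only inside the $x_{good}$-subtree, the empirical disagreement between $h_\ell$ and $c^*_f$ on $S_2$ is $O(\alpha)$, and Theorem~\ref{thm:learn vc} transfers this to population error $O(\alpha)$.

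Finally I would convert the error expression into a sample bound and rescale. Requiring $T\ln(2/\alpha)/\varepsilon' \leq O(\alpha N_2)$ with $T = 2/\alpha$ gives $N_2 = \tilde\Omega((\log(1/\alpha)+\log(1/\alpha\beta))/(\alpha^2 \varepsilon'))$, and plugging in $\varepsilon' = \Theta(\sqrt{\alpha}\,\varepsilon/\sqrt{\log(1/\delta)})$ produces the second term $\tilde O((\log(1/\alpha)+\log(1/\alpha\beta))\sqrt{\log(1/\delta)}/(\alpha^{2.5}\varepsilon))$ of the stated bound; adding $N_1$ from Theorem~\ref{thm:improper sample complexity} (whose internal parameter rescaling only perturbs polylogarithmic factors) yields the total sample size. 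The main obstacle is the bookkeeping: tracking how the advanced-composition blow-up factor $\sqrt{\log(1/\delta)/\alpha}$ interacts with the $1/\varepsilon'$ inside $N_2$ to produce the $\alpha^{-2.5}$ exponent rather than the naive $\alpha^{-2}$, and verifying that the $T$-fold union bound over the Chernoff and exponential-mechanism failure events only inflates $\beta$ by a logarithmic factor after the substitution $\beta \mapsto \alpha\beta$.
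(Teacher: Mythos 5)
Your proposal is correct and follows essentially the same route as the paper: the paper's ``proof'' is precisely the one-line parameter substitution you describe, with all the real work residing in the preceding privacy theorem (advanced composition over $T=2/\alpha$ rounds), the two Chernoff claims, Claim~\ref{claim:go to nonuniform case}, and the two exponential-mechanism claims, capped off by the final $(O(\alpha),O(\beta/\alpha))$-PAC claim and an appeal to Theorem~\ref{thm:learn vc}. Your accounting of where the $\alpha^{-2.5}$ exponent comes from --- the $1/\alpha^{2}$ from $T\ln(2/\alpha)/\varepsilon'\leq O(\alpha N_2)$ multiplied by the $\sqrt{1/\alpha}$ hidden in $\varepsilon'=\Theta(\sqrt{\alpha}\,\varepsilon/\sqrt{\log(1/\delta)})$ --- is exactly the bookkeeping the paper leaves implicit, and you are also right that the disjointness of $S_1$ and $S_2$ lets the improper learner's privacy be charged separately.
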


\bibliographystyle{plain}
\bibliography{refs}

\end{document}